\definecolor{grey}{rgb}{0.1,0.1,0.1}
\author{
Lijie Hu\inst{*,1,2},
Songning Lai\inst{*,1,2},
Yuan Hua\inst{*,1,2,3}, \\
Shu Yang\inst{1,2},
Jingfeng Zhang\inst{1,2,4},
Di Wang\inst{\dagger,1,2}
}
\authorrunning{L.Hu et al.}
\institute{Provable Responsible AI and Data Analytics (PRADA) Lab 
\and
King Abdullah University of Science and Technology
\and
Tsinghua University 
\and
University of Auckland}
\begin{document}

\title{Stable Vision Concept Transformers for Medical Diagnosis}

\titlerunning{Stable Vision Concept Transformers}

\maketitle              % typeset the header of the contribution

\def\thefootnote{*}\footnotetext{Equal Contribution.}
\def\thefootnote{$\dagger$}\footnotetext{Corresponding Author.}

\begin{abstract}
Transparency is a paramount concern in the medical field, prompting researchers to delve into the realm of explainable AI (XAI). Among these XAI methods, Concept Bottleneck Models (CBMs) aim to restrict the model's latent space to human-understandable high-level concepts by generating a conceptual layer for extracting conceptual features, which has drawn much attention recently. However, existing methods rely solely on concept features to determine the model’s predictions, which overlook the intrinsic feature embeddings within medical images. To address this utility gap between the original models and concept-based models, we propose \underline{\textbf{V}}ision \underline{\textbf{C}}oncept \underline{\textbf{T}}ransformer (\underline{\textbf{VCT}}). Furthermore, despite their benefits, CBMs have been found to negatively impact model performance and fail to provide stable explanations when faced with input perturbations, which limits their application in the medical field. To address this faithfulness issue, this paper further proposes the \underline{\textbf{S}}table \underline{\textbf{V}}ision \underline{\textbf{C}}oncept \underline{\textbf{T}}ransformer (\underline{\textbf{SVCT}}) based on VCT, which leverages the vision transformer (ViT) as its backbone and incorporates a conceptual layer. SVCT employs conceptual features to enhance decision-making capabilities by fusing them with image features and ensures model faithfulness through the integration of Denoised Diffusion Smoothing. Comprehensive experiments on four medical datasets demonstrate that our VCT and SVCT maintain accuracy while remaining interpretability compared to baselines. Furthermore, even when subjected to perturbations, our SVCT model consistently provides faithful explanations, thus meeting the needs of the medical field.

\keywords{Explainable medical image classification  \and Explainability \and Stability \and Medical diagnosis.}
\end{abstract}
\section{Introduction}
As the field of medical image analysis continues to evolve, deep learning models and methods have demonstrated excellent performance in tasks such as image recognition and disease diagnosis  \cite{Kermany2018IdentifyingMD}. However, these advanced deep learning models are usually regarded as black boxes and lack credibility and transparency. Especially in the medical field, this opacity makes it difficult for physicians and clinical professionals to trust the predictions of the models. Thus, the requirement for interpretability of model decisions is more urgent in the medical field \cite{hu2024towards,huai2020global,huai2020towards}.

The healthcare field, characterized by stringent requirements for trustworthiness, necessitates models that not only exhibit high performance but are also comprehensible and can be trusted by practitioners. Therefore, Explainable Artificial Intelligence (XAI) has become one of the hotspots for research and development. By introducing interpretability, XAI tries to make the decision-making process of deep learning models more transparent and understandable. Some compelling interpretable methods, such as attention mechanisms \cite{vaswani2017attention,hu2025towards}, saliency maps \cite{7780688}, DeepLIFT and Shapley values \cite{lundberg2017unified}, and influence functions \cite{koh2020understanding,hu2024dissecting}, attempt to provide users with visual explanations about model decisions. However, while these post-hoc explanatory methods can provide useful information, there is still a certain disconnect between their explanations and model decisions, and these explanations are generated after model training and fail to participate in the model learning process. Some studies \cite{rudin2019stop,lai2023faithful,hu2024faithful} have shown that post-hoc is sensitive to slight changes in the input, making the post-hoc methods misleading as they could provide explanations that do not accurately reflect the model’s decision-making process.

\begin{figure*}[!th]
    \centering
    \vspace{-15pt}
    \includegraphics[width=1\linewidth]{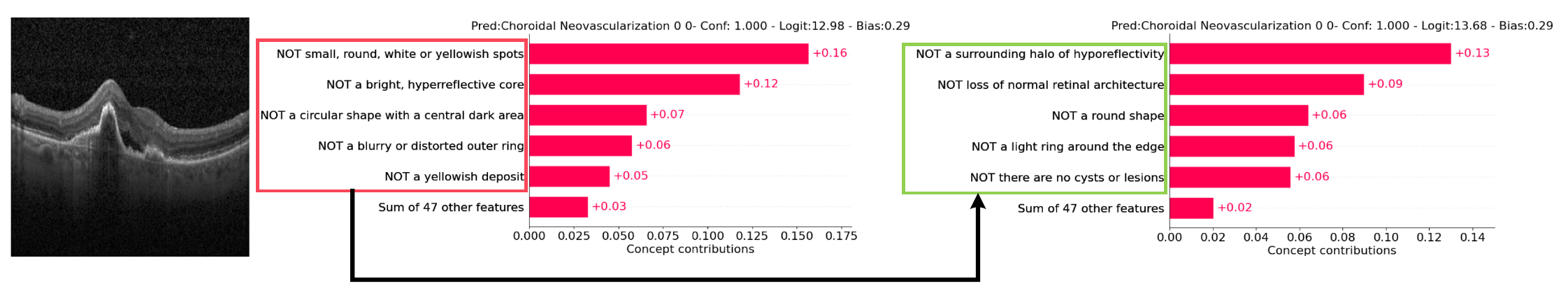}
    \caption{An example of VCT framework on OCT2017 dataset \cite{Kermany2018IdentifyingMD}. The leftmost figure displays the input image, while the adjacent one on the left shows the concept output without perturbations. In contrast, the figure on the right presents the concept output after applying input perturbations, resulting in noticeable changes. }
    \label{fig:2}
\vspace{-15pt}
\end{figure*}

Therefore, researchers have shown interest in self-explained methods. Among them, concept-based methods have attracted a lot of attention. These approaches strive to incorporate interpretability into machine learning models by establishing connections between their predictions and concepts that are understandable to humans. As an illustration, the Concept Bottleneck Model (CBM) \cite{koh2020concept} initially forecasts an intermediate set of predefined concepts, subsequently utilizing these concepts to make predictions for the final output. \cite{oikarinen2023labelfree} introduce Label-free CBM, a novel framework designed to convert any neural network into an interpretable CBM without the need for labeled concept data compared to the original CBM. These inherently interpretable methods provide concept-based explanations, which are generally more comprehensible than post-hoc approaches. However, many existing methods rely solely on concept features to determine the model's predictions. These approaches overlook the intrinsic feature embeddings within medical images. For instance, \cite{sarkar2022framework} solely utilizes concept labels to supervise the concept prediction results of the entire image. This oversight can lead to a decrease in classification accuracy, which is suggested to stem from the inefficient utilization of valuable medical information. Therefore, a significant challenge in the field of medical imaging is how to maintain a high level of accuracy while incorporating interpretability.

To address the aforementioned challenges, we propose Vision Concept Transformer (VCT), a novel medical image processing framework that is interpretable and maintains high performance. Vision Transformers (ViTs) \cite{dosovitskiy2020image} have achieved state-of-the-art performance for various vision tasks, showing good robustness in prediction. Thus, in the VCT framework, we utilize ViTs as the foundational network. To enhance interpretability, we employ a label-free methodology for generating the conceptual layer. Moreover, unlike previous CBMs, which only use conceptual features for prediction, in the VCT framework, we integrate conceptual features with image features, utilizing the conceptual layer as supplementary information to augment decision-making. This integration effectively addresses the issue of accuracy degradation associated with a singular label-free CBM, ensuring interpretability without compromising accuracy. 

While VCT keeps the interpretability of CBMs, it also 
inherits their interpretability instability when facing perturbations or noise in the input. Specifically, adding slight noise to the input image can significantly change the top-$k$ important concepts given by CBMs (see Figure \ref{fig:2} for an example), i.e., the top $k$-indices of the concept vector. Instability is a common issue in deep learning interpretation methods, making it challenging to understand model reasoning \cite{hu2022seat}, especially with unlabeled data and self-supervised training \cite{ghorbani2018interpretation}. As in real medical scenarios, there is always natural and inherent noise or some adversarial examples manipulated by attackers \cite{apostolidis2021survey,fu2025short,xu2023llm}. Thus, VCT cannot be a faithful explainable tool for these applications. 

To address the faithfulness issue, by using the Denoised Diffusion Smoothing method, we can smoothly and directly transform VCT into a Stable Vision Concept Transformer (SVCT) framework that is capable of providing stable interpretations despite perturbations to the inputs, the structure is shown in Figure \ref{fig:e over}. Our contributions can be summarised as follows.

\begin{itemize}
  \item We proposed the VCT framework, transforming ViTs into an interpretable CBM. VCT integrates conceptual features with image features, utilizing conceptual features as auxiliary decision-making components. This effectively addresses the performance degradation issue in existing CBMs due to inefficient utilization of medical information.
  \item To further enhance the interpretability stability of VCT, we propose a formal mathematical definition of an SVCT, which ensures that the top-$k$ index of its conceptual vectors remains relatively stable under slight perturbations. We utilize a Denoised Diffusion Smoothing (DDS) method to obtain an SVCT. Moreover, we theoretically proved that our method satisfies the properties of SVCT. 
  \item We conducted extensive experiments on four medical datasets to validate the superiority of SVCT in the medical domain. First, we demonstrate that our SVCT is more accurate and interpretable than other CBM approaches. Secondly, we verified that the SVCT model still provides stable explanations under perturbations.
\end{itemize}
\section{Related Work}

\paragraph{Concept Bottleneck Models.}
Concept Bottleneck Model (CBM) \cite{koh2020concept} stands out as an innovative deep-learning approach applied to image classification and visual reasoning. It introduces a concept bottleneck layer into deep neural networks, enhancing model generalization and interpretability by learning specific concepts. However, CBM faces two primary challenges: its performance often lags behind that of original models lacking the concept bottleneck layer, attributed to incomplete information extraction from the original data to bottleneck features. Additionally, CBM relies on laborious dataset annotation \cite{ismail2023concept,hu2024semi,hu2024editable}. Researchers have explored solutions to these challenges. \cite{chauhan2023interactive} extend CBM into interactive prediction settings, introducing an interaction policy to determine which concepts to label, thereby improving final predictions. \cite{oikarinen2022label} address CBM limitations and propose a novel framework called Label-free CBM. This innovative approach enables the transformation of any neural network into an interpretable CBM without requiring labeled concept data, all while maintaining high accuracy \cite{yuksekgonul2023posthoc}. However, most of the existing CBMs use only conceptual features for prediction, which can cause a degradation in prediction performance and make them unsuitable for medical scenarios.

\paragraph{Faithfulness in Explainable Methods.}
Faithfulness is an important property that should be satisfied by explanatory models, which ensures that the explanation accurately reflects the true reasoning process of the model \cite{jacovi2020faithfully,hu2024hopfieldian,gou2023fundamental}. Stability is crucial to the faithfulness of the interpretation. Some preliminary work has been proposed to obtain stable interpretations. For example, \cite{yeh2019infidelity} theoretically analyzed the stability of post-hoc explanations and proposed the use of smoothing to improve the stability of explanations. They devised an iterative gradient descent algorithm for obtaining counterfactual explanations, which showed desirable stability. However, these techniques are designed for post-hoc explanations and cannot be directly applied to attention-based mechanisms like ViTs.

\paragraph{Interpretability in Medical Image Classification.}
In the research of interpretable artificial intelligence in medical image analysis, \cite{yan2023robust} proposes a new method to construct a robust and interpretable medical image classifier using natural language concepts, and it has been evaluated on multiple datasets. \cite{sarkar2022framework} focuses on self-explanatory deep models, introducing a model that implicitly learns conceptual explanations during training by adding an explanation generation module. These methods collectively enhance the interpretability of the model. However, the existing interpretability methods face two main issues. Firstly, they rely solely on concept features for decision-making, leading to insufficient utilization of valuable information in medical images and resulting in a performance decline in medical image processing. Secondly, existing methods exhibit instability when confronted with noise, failing to provide faithful explanations. Therefore,  our work aims to ensure good performance while maintaining interpretability and providing faithful explanations to address these issues. See Appendix F for more details.
\section{Stable Vision Concept Transformer}
In this section, we propose the Stable Vision Concept Transformer (SVCT) framework. Specifically, we first leverage the Label-free Concept Bottleneck Model \cite{oikarinen2023labelfree} to transform the ViT network into an interpretable CBM without concept labels,  which is an automated, scalable, and efficient fashion to address the core limitations of existing CBMs. We then fuse the concept features with the ViTs features as decision-aiding features, which not only improves the interpretability of the model but also ensures a high degree of accuracy. To obtain an SVCT, we adopt Denoised Diffusion Smoothing (DDS) to turn it into an SVCT. 

Our model consists of the following six steps, which are illustrated in Figure \ref{fig:e over} - \textbf{Step1:} The ViT model is trained on the target task, and VCT is transformed into SVCT by inserting the DDS method. \textbf{Step2:} We generate initial concept set based on the target task and filter out unwanted concepts using a series of filters. \textbf{Step3:} Compute embeddings by the backbone on the training dataset and obtain the concept matrix. \textbf{Step4:} Learn projection weights $W_c$ to create a Concept Bottleneck Layer (CBL). \textbf{Step5:} Fuse the concept features with the ViTs features. \textbf{Step6:} Learn the weights $W_F$ of the sparse final layer to make predictions. Detailed notations can be found in Table~\ref{tab:notions}. We first introduce VCT for convenience. 

\begin{figure*}[th]
    \centering
    \includegraphics[width=1\linewidth]{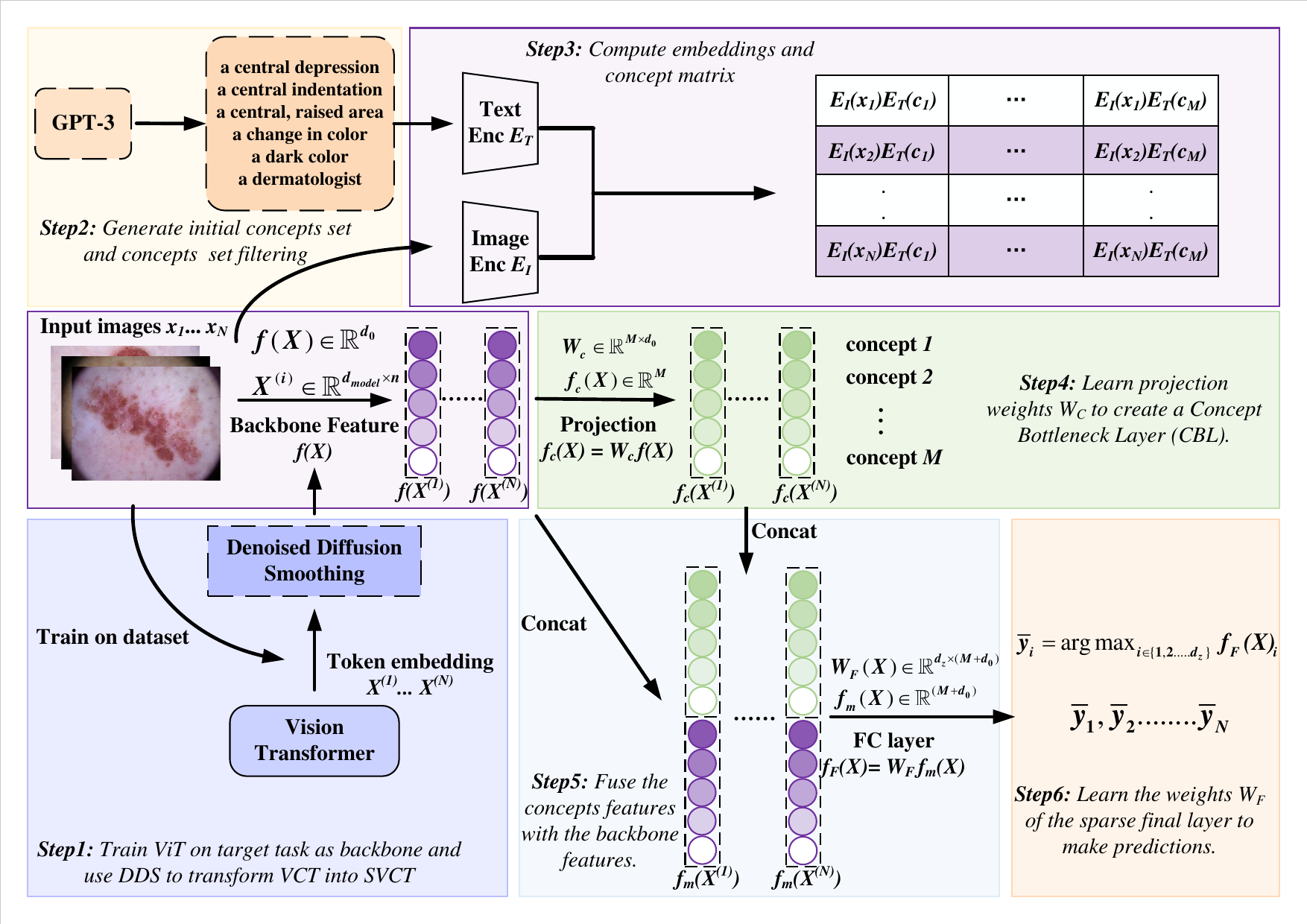}
    \caption{Overview of our Stable Vision Concept Transformer (SVCT) model.}
    \label{fig:e over}
%\vspace{-10pt}
\vspace{-15pt}
\end{figure*}

\subsection{Vision Concept Transformer}
In this section, we introduce the vision concept transformer. Before that, it is necessary to pre-train the ViT model $f$ on the target task dataset as a backbone for the VCT framework.

\noindent{\bf Label-free CBMs.}
We use the label-free CBM \cite{oikarinen2023labelfree} to get concept feature $f_c\left(X\right) \in \mathbb{R}^M$, where $M$ is the number of concepts. Firstly, we obtain a concept set and use it as human-understandable concepts in the concept bottleneck layer (See  Appendix D and E for details). Next, we need to learn how to project from the feature space $\mathbb{R}^{d_0}$ of the backbone network to an interpretable feature space $\in \mathbb{R}^M$ that corresponds to the set of interpretable concepts in the axial direction. We use a way of learning the projection weights $W_c \in \mathbb{R}^{M\times d_0}$ without any labeled concept data by utilizing CLIP-Dissect \cite{oikarinen2023clipdissect}. We can learn about a bottleneck conceptual layer and get the concept feature 
\begin{equation}\label{eq:100}
    f_c\left(X\right) = W_cf(X) \in \mathbb{R}^M.
\end{equation}

\noindent{\bf Concat ViT feature and concept feature.}
Now that we have learned about the conceptual bottleneck layer and get $W_c \in \mathbb{R}^{M\times d_0}$. In VCT, the conceptual features are no longer used as the only features for classification. According to previous studies, based on the conceptual features alone will degrade the accuracy of the model. Therefore, here we use the conceptual features as the supplementary features, which are fused with the features extracted from the backbone network, and this feature fusion makes the VCT able to ensure accuracy improvement while having a better explanatory nature. Specifically, we define $f_m(X)=\text{concat}(f(X), f_c(X))$, where $f_m\left(X^{(i)}\right) \in \mathbb{R}^{M+d_0}$,
and we define a feature of VCTs for prediction as follows:
\begin{equation}\label{eq:F1}
F(X) = \text{concat}(f(X),W_cf(X)).
\end{equation}

\noindent{\bf Final classification layer.}
The next goal is to learn the final predictor using the fully connected layer $W_F \in \mathbb{R}^{d_z\times \left(M+d_0\right)}$, where $d_z$ represents the final number of predicted categories. For each input $X$, we have access to its predictive distribution through the final classification layer.

\subsection{Stable VCT}
As we mentioned in the introduction and Figure~\ref{fig:2}, CBMs and VCT have an interpretation instability issue, i.e., a slight perturbation on the input could change the top-$k$ concepts in the concept vector (concept feature in VCT). Here we aim to address the instability issue. We first give the definition of the top-$k$ overlap ratio for two (concept) vectors, 

\begin{definition}
\label{def:1}
For vector $x\in \mathbb{R}^n$, we define the set of top-$k$ component $T_k(\cdot)$ as 
\begin{equation*}
    T_k(x)=\{i: i\in [d] \text{ and } \{|\{x_j\geq x_i: j\in [n]\}|\leq k\} \}.
\end{equation*}
For two vectors $x$, $x'$, their top-$k$ overlap ratio $V_k(x, x')$ is defined as $V_k(x, x')=\frac{1}{k} | T_k(x) \cap T_k(x')|$. 
\end{definition} 

\begin{definition}[Stable VCTs]
\label{def:2}
Giving $M$ number of concepts, a norm $\|\cdot\|$, and a divergence metric $D$, we call a function $g: \mathbb{R}^{d_{model} \times n} \rightarrow \mathbb{R}^{M}$ is an $(R, D, \gamma,  \beta, k,  \|\cdot\|)$-stable concept module for VCTs  if for any given input data $X$ and for all $X^{\prime} \in \mathbb{R}^{d_{model} \times n}$ such that $\|X-X^{\prime}\| \leq R$:
\begin{enumerate}
\item [(1)] (Explanation Stability) $V_k\left(g\left(X^{\prime}\right), g(X)\right) \geq \beta$.       
\item [(2)] (Prediction Robustness) $D\left(\bar{y}(X), \bar{y}\left(X^{\prime}\right)\right) \leq \gamma$, where $\bar{y}(X), \bar{y}\left(X^{\prime}\right)$ are the prediction distribution of VCTs based on $g(X), g\left(X^{\prime}\right)$ respectively. 
\end{enumerate}
We call the models of VCTs based on $g$ as SVCTs. 
\end{definition}
Intuitively, for input $X$, $g(X)$ is its concept vector. Thus, the first condition of SVCT ensures that the $k$-most important concepts will not change much, even if there are some perturbations on the input. The second one guarantees that the prediction of SVCT is also stable against perturbation, which inherits the good performance of VCT. For the parameters, $R$ represents the stable radius. Within this radius, $g$ is a stable concept module, $D$ is the Rényi divergence between two distributions (we denote it as $D_\alpha$). $\gamma$ is a similarity coefficient, and as $\gamma$ gets smaller, $g$ is more robust. $\beta$ is the stability coefficient, which measures the stability of the interpretation, and as $\beta$ gets larger, $g$ is more stable. In this paper, $\|\cdot\|$ is the $\ell_2$-norm (if we consider $X$ as a $d=d_{model} \times n$ dimensional vector). We can show if the prediction distribution is robust under Rényi divergence, then the prediction will be unchanged with perturbations on input (shown in Theorem \ref{thm:0})~\cite{zheng2020towards}.

\begin{theorem}\label{thm:0}
If a function is a $(R, D_\alpha, \gamma,  \beta, k,  \|\cdot\|)$-{stable concept module} for VCTs, then if 
\begin{equation*}
    \gamma \leq - \log(1 - p_{(1)} - p_{(2)} + 2(\frac{1}{2}(p_{(1)}^{1 - \alpha} + p_{(2)}^{1 - \alpha}))^{\frac{1}{1- \alpha}}),
\end{equation*}
we have for all $X'$ such that where $\|X - X'\| \leq R$,
\begin{equation*}
\arg\max_{h \in \mathcal{H}} \mathbb{P}(\bar{y}(X)=h) = \arg\max_{h \in \mathcal{H}} \mathbb{P}(\bar{y}(X')=h),    
\end{equation*}
where $\mathcal{H}$ is the set of classes,
$p_{(1)}$ and $p_{(2)}$ refer to the largest and the second largest probabilities in $\{ p_i \}$, where $p_i$ is the probability that $\bar{y}(X)$ returns the $i$-th class.
\end{theorem}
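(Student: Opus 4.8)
The plan is to read the statement as a certified-robustness guarantee. Condition (2) of the stable concept module supplies a hard bound $D_\alpha(\bar y(X),\bar y(X'))\le\gamma$ on the R\'enyi divergence between the clean and perturbed prediction distributions for every admissible $X'$, and I want to show that whenever $\gamma$ stays at or below the displayed threshold $\gamma^\ast:=-\log(1-p_{(1)}-p_{(2)}+2(\tfrac12(p_{(1)}^{1-\alpha}+p_{(2)}^{1-\alpha}))^{1/(1-\alpha)})$, no such perturbation can move the top class. Writing $P=\bar y(X)$ and $Q=\bar y(X')$ for the two distributions over $\mathcal H$, with sorted coordinates $p_{(1)}\ge p_{(2)}\ge\cdots$ and $a=\arg\max_h p_h$, the argmax is preserved exactly when $q_a>q_j$ for every $j\neq a$. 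So it suffices to prove that $\gamma^\ast$ is precisely the smallest divergence a distribution $Q$ must incur from $P$ in order to break this inequality; then $\gamma\le\gamma^\ast$ forces $Q$ to stay outside the flipping region.

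Concretely, I would study the constrained program $\min_Q D_\alpha\ \text{s.t.}\ \sum_j q_j=1,\ q_j\ge0,\ q_a\le q_b$ for a fixed competitor $b$, and minimize its value over the choice of $b$. Two reductions do the real work. First, the cheapest competitor is the runner-up carrying $p_{(2)}$: any other class has smaller clean mass and is therefore ``further'' in divergence, so by a monotonicity argument the optimal move pushes probability toward the second-largest coordinate. Second, at the optimum the constraint is tight, i.e. $q_a=q_b=:t$, since relaxing past the tie only raises the cost. With these two reductions the flip problem collapses to a smooth optimization in which the remaining coordinates are unconstrained.

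I would then apply Lagrange multipliers to $\sum_i q_i^\alpha p_i^{1-\alpha}$ (the argument of the $\log$ defining $D_\alpha$). Stationarity for the untouched coordinates $i\notin\{a,b\}$ forces $q_i\propto p_i$, say $q_i=s\,p_i$, while $q_a=q_b=t$; normalization reads $2t+s(1-p_{(1)}-p_{(2)})=1$. Eliminating $t$ through the stationarity relation $t^{\alpha-1}(p_{(1)}^{1-\alpha}+p_{(2)}^{1-\alpha})=2s^{\alpha-1}$, the objective collapses to $s^{\alpha-1}(2t+s\,r)=s^{\alpha-1}$ with $r=1-p_{(1)}-p_{(2)}$, and solving for $s$ gives $s=(2M+r)^{-1}$ where $M=(\tfrac12(p_{(1)}^{1-\alpha}+p_{(2)}^{1-\alpha}))^{1/(1-\alpha)}$ is the power mean of order $1-\alpha$. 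Hence the minimal flipping divergence is $\tfrac{1}{\alpha-1}\log s^{\alpha-1}=\log s=-\log(r+2M)=\gamma^\ast$, exactly the stated threshold, so $\gamma\le\gamma^\ast$ together with $D_\alpha(P,Q)\le\gamma$ keeps $Q$ off the tie surface and yields $\arg\max_h\mathbb P(\bar y(X)=h)=\arg\max_h\mathbb P(\bar y(X')=h)$.

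The main obstacle is the pair of reductions, not the closing calculus: I must justify rigorously that among all ways to violate $q_a>q_j$ the cheapest is an exact tie with the runner-up, and that the non-top coordinates remain proportional to $P$. Both follow from convexity and monotonicity of $q\mapsto q^\alpha p^{1-\alpha}$ together with a symmetrization argument, but keeping the direction of the R\'enyi divergence and the signs of the exponents ($\alpha-1$ versus $1-\alpha$) consistent throughout, and treating the boundary tie so that the surviving argmax is genuinely unique, is where the care is required.
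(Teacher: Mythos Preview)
Your proposal is correct and follows the same logical skeleton as the paper: both arguments reduce the claim to the contrapositive of the R\'enyi divergence lower bound that the paper records as the ``R\'enyi Divergence Lemma'' (quoted from Li et al.). The difference is purely one of depth. The paper simply \emph{cites} that lemma as a black box and observes that if $D_\alpha(\bar y(X),\bar y(X'))\le\gamma\le\gamma^\ast$ then the argmaxes cannot differ; you instead \emph{re-derive} the lemma by solving the constrained minimization of $D_\alpha$ over all $Q$ whose top class disagrees with $P$, via the two reductions (cheapest competitor is the runner-up; optimal flip is an exact tie) followed by Lagrange multipliers. Your derivation is the standard one and recovers the threshold exactly, so what you gain over the paper is a self-contained argument rather than an appeal to an external reference.

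One caution worth flagging explicitly: the expression you optimize, $\sum_i q_i^{\alpha}p_i^{1-\alpha}$, is the Hellinger sum for $D_\alpha(Q\Vert P)$ under the paper's convention $D_\alpha(P\Vert Q)=\tfrac{1}{\alpha-1}\log\mathbb E_{Q}(P/Q)^\alpha$, not for $D_\alpha(P\Vert Q)$. Your final threshold does match the theorem, so the calculation is internally consistent, but you should make sure the orientation you use agrees with the orientation in Definition~2 when you write it up; you already identify this bookkeeping as the delicate point, and it is.
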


\paragraph{Finding Stable Vision Concept Transformers.} 
Motivated by \cite{ho2020denoising}, we propose a method called Denoised Diffusion Smoothing (DDS) to obtain SVCTs. The process is as follows: we use randomized smoothing to the VCT and then apply a denoised diffusion probabilistic model to the perturbed input. With this processing, we can transform a VCT into an SVCT, and its corresponding concept module becomes a stable concept module. Specifically, for a given input image $x$, its corresponding token embedding is $X$. We add some randomized Gaussian noise to $X$, i.e., $\tilde{X}=X+S$, where $S \sim \mathcal{N}\left(0, \sigma^2 I_{d_{model} \times n}\right)$. Then we will use some denoised diffusion models to denoise $\tilde{X}$ to get $\hat{X}$. We then take the obtained $\hat{X}$ as a new input to get concept feature $f_c(\hat{X})$ in (\ref{eq:100}) and go through the remaining structures of the VCT to get the final prediction. 

Specifically, for a given input $X$, randomized smoothing is done by augmenting the data points of an image by adding additive Gaussian noise to the image, which we can denote as $X_{\mathrm{rs}} \sim \mathcal{N}\left(X, \sigma^2 \mathbf{I}\right)$. Diffusion models rely on a particular form of noise modeling, denoted as $X_t \sim \mathcal{N}\left(\sqrt{\beta_t} X,\left(1-\beta_t\right) \mathrm{I}\right)$. Where $\beta_t$ is a constant related to time step $t$. Thus, if we want to use a diffusion model for randomized smoothing, we need to establish a link between the parameters of the two noise models. The DDS model used in this paper multiplies $X_{rs}$ by the factor $\sqrt{\beta_t}$, thus satisfying the requirement of the noise mean, and accordingly, in order to satisfy the requirement of the variance, we can obtain the equation $\sigma^2=\frac{1-\beta_t}{\beta_t}$. As the time step changes, $\sigma^2$ changes as $\beta_t$ changes because $\beta_t$ is a constant with respect to the time step. But it can be computed at every time step, and by using this, we are able to obtain $X_{t^*}=\sqrt{\beta_{t^*}}(X+S)$, where $ S \sim \mathcal{N}\left(0, \sigma^2 \mathbf{I}\right)$. Such a form of noise is consistent with the form on which the diffusion model depends, and we can use the diffusion model on $X_{t^*}$ to obtain denoised sample $\hat{X}=\operatorname{denoise}\left(X_{t^*}; t^{*}\right)$. In this paper, we repeat this process several times to improve robustness. 

In the following, we show that $\tilde{w}=f_c(\hat{X})$ is a stable concept feature satisfying Definition \ref{def:2} if $\sigma^2$ satisfies some condition. Before showing the results, we first provide some notations. For input image $x$, we denote $\tilde{w}_{i^*}$ as the $i$-th largest component in $\tilde{w}(x)$. Let $k_0=\lfloor (1-\beta)k \rfloor +1$ as the minimum number of changes on $\tilde{w}(x)$ to make it violet the  $\beta$-top-$k$ overlapping ratio with $\tilde{w}(x)$. Let $\mathcal{S}%=\{k-k_0+1,\cdots,k+k_0\}$ 
$ denote the set of last $k_0$ components in top-$k$ indices and the top $k_0$ components out of top-$k$ indices. Then, we can prove the following upper bound. The details of the algorithm are in Algorithm \ref{alg:1}.

\begin{algorithm}
    \caption{SVCTs via Denoised Diffusion Smoothing}
    \label{alg:1}
    \begin{algorithmic}[1]
        \State {\bfseries Input:} $X$; A standard deviation $\sigma > 0$. %Number of iterations $n$.
        %\State Set $i=1$.
        %\FOR {$i \in [n]$}
        %\STATE Denote $ \tilde{x} = x + \mathcal{N}(0, \sigma^2 \textbf{I}) $.
            %Let the output attention wight be $c_i = f(x + \mathcal{N}(\bm{0}, \sigma^2I))$.
       % \ENDFOR
        \State $t^{*}$, find $t$ s.t. $\frac{1-\beta_t}{\beta_t} = \sigma^2$. 
        \State $X_{t^{*}} = \sqrt{\beta_{t^{*}}} (\tilde{X} + \mathcal{N}(0, \sigma^2 \textbf{I})) $.
        \State $\hat{X} = \text{denoise}(X_{t^{*}}; t^{*})$.
        \State $w = f_c(\hat{X})$, where $f_c$ is in (\ref{eq:100}). 
        \State {\bfseries Return:} Concept feature vector $w$.
    \end{algorithmic}
\end{algorithm} 

\begin{theorem}\label{thm:5.1}
Consider the function  $\tilde{w}(X)=f_c(T(X+S))$,  where $f_c$ as the function in (\ref{eq:100}), $T$ as the denoised diffusion model and $S\sim \mathcal{N}(0, \sigma^2 I_{d_{model}\times n})$. Then, it is an
$(R, D_\alpha, \gamma,  \beta, k,  \|\cdot\|_2)$-{stable concept module} for VCTs for any $\alpha> 1$ if for any input image $x$ we have
\begin{align*}
\sigma^2 & \geq \max\{{\alpha R^2} / 2(\frac{\alpha}{\alpha-1}\ln(2k_0(\sum_{i\in \mathcal{S}}\tilde{w}^\alpha_{i^*})^\frac{1}{\alpha} + \\
& (2k_0)^\frac{1}{\alpha}\sum_{i\not\in \mathcal{S}}\tilde{w}_{i^*})-\frac{1}{\alpha-1}\ln (2k_0)),  {\alpha R^2} /2\gamma\}. \nonumber
\end{align*}
\end{theorem}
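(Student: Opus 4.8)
The plan is to treat the injected Gaussian noise $S\sim\mathcal{N}(0,\sigma^2 I)$ as the engine of smoothing and to route the entire argument through the Rényi divergence $D_\alpha$, which is well-behaved both under Gaussian shifts and under post-processing. The one fact I would establish first is the master bound: for any two inputs with $\|X-X'\|_2\le R$, the noisy inputs satisfy $D_\alpha\bigl(\mathcal{N}(X,\sigma^2 I)\,\|\,\mathcal{N}(X',\sigma^2 I)\bigr)=\frac{\alpha\|X-X'\|_2^2}{2\sigma^2}\le\frac{\alpha R^2}{2\sigma^2}$. Since the denoiser $T$, the projection $f_c$, and the final predictor are all (possibly randomized) measurable maps applied afterwards, the data-processing inequality for $D_\alpha$ transfers this bound verbatim to the concept features and to the prediction distributions:
$$D_\alpha\bigl(\tilde w(X)\,\|\,\tilde w(X')\bigr)\le\tfrac{\alpha R^2}{2\sigma^2},\qquad D_\alpha\bigl(\bar y(X)\,\|\,\bar y(X')\bigr)\le\tfrac{\alpha R^2}{2\sigma^2}.$$

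Prediction Robustness (condition (2)) then follows with essentially no extra work: it suffices that $\frac{\alpha R^2}{2\sigma^2}\le\gamma$, i.e. $\sigma^2\ge \alpha R^2/(2\gamma)$, which is exactly the second term inside the stated maximum, so this half of Definition~\ref{def:2} is immediate from the master bound.

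Explanation Stability (condition (1)) is the real work and the step I expect to be the main obstacle. I would argue by contradiction: suppose $V_k(\tilde w(X'),\tilde w(X))<\beta$, so that at least $k_0=\lfloor(1-\beta)k\rfloor+1$ of the top-$k$ indices are displaced in passing from $X$ to $X'$. The key structural observation is that the cheapest displacement, measured in Rényi divergence, never needs to move mass outside the $2k_0$ boundary coordinates collected in $\mathcal{S}$ --- the last $k_0$ survivors of the top-$k$ block together with the first $k_0$ challengers just below it --- while the ranking of every remaining coordinate can be left intact. I would therefore minimize $\sum_i \tilde w_{i^*}^{\alpha}\,q_i^{1-\alpha}$ (the kernel of $e^{(\alpha-1)D_\alpha}$) over all redistributions $q$ that force a tie or flip inside $\mathcal{S}$, solving it by a Lagrange/KKT argument that equalizes the boundary coordinates while keeping the outside coordinates proportional to $\tilde w$.

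Applying H\"older's inequality separately to the two groups $i\in\mathcal{S}$ and $i\notin\mathcal{S}$ converts this minimum into the closed form $A=2k_0\bigl(\sum_{i\in\mathcal{S}}\tilde w^{\alpha}_{i^*}\bigr)^{1/\alpha}+(2k_0)^{1/\alpha}\sum_{i\notin\mathcal{S}}\tilde w_{i^*}$, and shows that any displacing $X'$ must obey $D_\alpha\bigl(\tilde w(X)\,\|\,\tilde w(X')\bigr)\ge \frac{1}{\alpha-1}\ln\bigl(A^{\alpha}/(2k_0)\bigr)=\frac{\alpha}{\alpha-1}\ln A-\frac{1}{\alpha-1}\ln(2k_0)$. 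Comparing with the master bound, a contradiction arises precisely when $\frac{\alpha R^2}{2\sigma^2}$ lies strictly below this threshold, which is the first term inside the maximum; hence no displacing $X'$ can exist within radius $R$ and the $\beta$-overlap is certified. Taking $\sigma^2$ at least the maximum of the two thresholds secures both conditions at once. The delicate points I would budget the most care for are verifying that restricting the adversarial redistribution to $\mathcal{S}$ is without loss of generality (a monotonicity/exchange argument on the summands of the Rényi kernel, since moving mass away from a high-$\tilde w$ coordinate only inflates $D_\alpha$) and tracking the normalization so that the H\"older step produces exactly the stated constants rather than a loose surrogate.
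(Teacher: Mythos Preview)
Your proposal is correct and follows essentially the same route as the paper: the Gaussian R\'enyi bound plus data-processing gives the master inequality, the second term of the maximum handles prediction robustness directly, and the first term comes from lower-bounding the R\'enyi divergence needed to displace $k_0$ top-$k$ indices via an exchange argument restricting attention to $\mathcal{S}$ followed by an optimization over the constrained $q$. The only cosmetic difference is that the paper solves the final constrained minimization by writing out the Lagrangian explicitly and reading off the optimal $q_i$, whereas you invoke H\"older on the two blocks; both yield the identical closed form $\frac{\alpha}{\alpha-1}\ln A-\frac{1}{\alpha-1}\ln(2k_0)$, and your anticipated exchange/monotonicity step is exactly the content of the paper's auxiliary lemma establishing that the minimizer equalizes the $2k_0$ boundary coordinates and keeps the remaining $q_i$ proportional to $\tilde w_{i^*}$.
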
 
%Theorem \ref{thm:5.1} indicates that $\tilde{w}(X)$ will be faithful for input $X$ when $\sigma^2$ is large enough. Equivalently, based on Theorem \ref{thm:5.1} and \ref{thm:0}, we can also find a faithful region given $\theta$ and $k$. Note that in practice, it is hard to determine the specific $\gamma$ in R\'{e}nyi divergence. Thus, we can take the supreme w.r.t all $\gamma>1$ in finding the faithful region. See Algorithm \ref{alg:2} in Appendix for details.
\begin{figure*}[th]
    \centering    \includegraphics[width=1\linewidth]{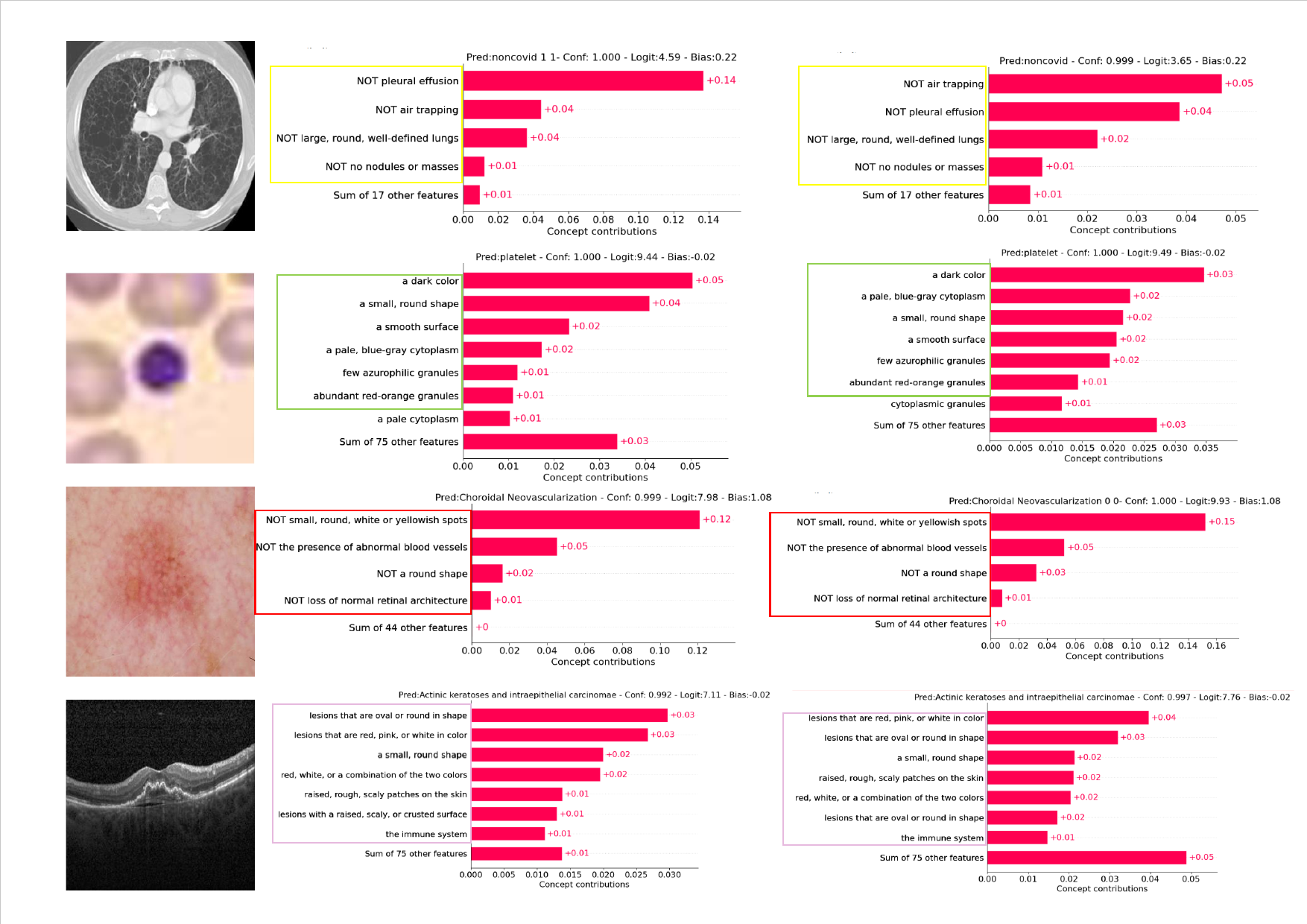}
    \caption{Results of concept visualization. From left to right: one sample from each dataset, concept visualization results before perturbation, and concept visualization results after perturbation. Clear and enlarged pictures are shown in the Appendix L.}
    \label{fig:3}
%\% vspace{-7pt}
% vspace{-15pt}
\end{figure*}

\section{Experiments}
\subsection{Experimental Settings}
\noindent{\bf Datasets.}
We conducted experiments on four medical datasets, including Human Against Machine with 10,015 training images (HAM10000) dataset \cite{Tschandl_2018}, Covid19-CT dataset \cite{zhao2020COVID-CT-Dataset}, BloodMNIST dataset \cite{Yang_2023}, and Optical coherence tomography (OCT) 2,017 dataset \cite{Kermany2018IdentifyingMD}. Details are in Appendix G.

\begin{table}[thbp]
% \vspace{-15pt}
    \centering
    \caption{Results of accuracy for the baselines and SVCT w/w.o perturbation. }
   \resizebox{1\linewidth}{!}{
    \begin{tabular}{lcccc}
    \toprule
        \textbf{Method} & \textbf{HAM10000} & \textbf{Covid19-CT} & \textbf{BloodMNIST} & \textbf{OCT2017} \\
        \bottomrule
        Standard (No interpretability) & $99.13 \%$ & $81.62 \%$ & $97.05 \%$ & $99.70 \%$ \\
        \hline
        Label-Free CBM (LF-CBM) & $93.61 \%$ & $79.75 \%$ & $94.97 \%$ & $97.50 \%$ \\
        Post-hoc CBM (P-CBM) & $97.60 \%$ & $76.26 \%$ & $94.83 \%$ & $98.60 \%$ \\
        \rowcolor{grey!20}
        Vision Concept Transformer (VCT) & $99.00 \%$ & $80.62 \%$ & $96.21 \%$ & $99.10 \%$ \\
        \rowcolor{grey!20}
        \textbf{Stable VCT(SVCT)} & $\textbf{99.05\%}$ & $\textbf{81.37\%}$ & $\textbf{96.96\%}$ & $\textbf{99.50\%}$ \\
        \hline
        % $\rho_u=6/255$ - base & $91.98 \%$ & $73.09 \%$ & $86.85 \%$ & $93.22 \%$ \\
        % \rowcolor{grey!20}
        % $\rho_u=6/255$ - \textbf{SVCT} & $98.45 \%$ & $76.32 \%$ & $98.32 \%$ & $98.90 \%$ \\
        % \hline
        % $\rho_u=7/255$ - base & $90.88 \%$ & $69.74 \%$ & $83.56 \%$ & $92.40 \%$ \\
        % \rowcolor{grey!20}
        % $\rho_u=7/255$ - \textbf{SVCT} & $98.28 \%$ & $74.77 \%$ & $94.70 \%$ & $98.85 \%$ \\
        % \hline
        $\rho_u=8/255$ - LF-CBM & $90.08 \%$ & $67.98 \%$ & $80.53 \%$ & $91.88 \%$ \\
        $\rho_u=8/255$ - P-CBM & $90.96 \%$ & $70.66 \%$ & $77.55 \%$ & $91.70 \%$ \\
        \rowcolor{grey!20}
        $\rho_u=8/255$ - VCT & $95.80 \%$ & $69.78 \%$ & $89.45 \%$ & $96.80 \%$ \\
        \rowcolor{grey!20}
        $\rho_u=8/255$ - \textbf{SVCT} & $\textbf{97.97\%}$ & $\textbf{74.45\%}$ & $\textbf{94.07\%}$ & $\textbf{98.70\%}$ \\
        \hline
        % $\rho_u=9/255$ - base & $89.45 \%$ & $66.30 \%$ & $77.95 \%$ & $91.16 \%$ \\
        % \rowcolor{grey!20}
        % $\rho_u=9/255$ - \textbf{SVCT} & $97.60 \%$ & $72.27 \%$ & $93.34 \%$ & $98.60 \%$ \\
        % \hline
        $\rho_u=10/255$ - LF-CBM & $88.70 \%$ & $65.12 \%$ & $75.63 \%$ & $90.58 \%$ \\
        $\rho_u=10/255$ - P-CBM & $90.21 \%$ & $66.32 \%$ & $74.27 \%$ & $90.10 \%$ \\
        \rowcolor{grey!20}
        $\rho_u=10/255$ - VCT & $95.28 \%$ & $68.85 \%$ & $87.71 \%$ & $96.25 \%$ \\
        \rowcolor{grey!20}
        $\rho_u=10/255$ - \textbf{SVCT} & $\textbf{97.24\%}$ & $\textbf{71.65\%}$ & $\textbf{92.65\%} $ & $\textbf{98.48\%}$ \\
        \bottomrule
    \end{tabular}}
%% \vspace{-10pt}
% \vspace{-15pt}
\label{tab:1}   
\end{table}

\noindent{\bf Baselines.}
In this paper, the standard model is ViT \cite{dosovitskiy2020image}, which accomplishes the classification task by extracting image features, but the model itself is not interpretable. The baseline model is label-free CBM \cite{oikarinen2023labelfree}, which uses ViT as the backbone to generate a conceptual bottleneck layer and finally makes predictions through a linear layer.

% \noindent{\bf Experimental setup.} 
% All datasets in this paper have Vision Transformer as the backbone network. In particular, for the HAM10000 dataset, due to its unbalanced distribution of the number of samples with different labels, new samples are added by random sampling from a small number of classes so that the proportion of samples in each class is 1. For the Covid19-CT dataset, we use a randomized cropping strategy to perform data enhancement. For the BloodMNIST dataset, we need to use mean=0.5 and std=0.5 for normalization. We used the SGD optimizer with a weight decay of $10^{-4}$.

\begin{table}[thbp]
    \centering
\caption{Results on CFS and CPCS for the baselines and SVCT under various perturbations.}
\resizebox{1\linewidth}{!}{
    \begin{tabular}{lcccccccc}
    \toprule
    \multirow{2}{*}{\textbf{Method}} & \multicolumn{2}{c}{\textbf{HAM10000}} & \multicolumn{2}{c}{\textbf{Covid19-CT}} & \multicolumn{2}{c}{\textbf{BloodMNIST}} & \multicolumn{2}{c}{\textbf{OCT2017}} \\
    \cmidrule{2-9}
    & CFS & CPCS & CFS & CPCS & CFS & CPCS & CFS & CPCS \\
\midrule
$\rho_u=6/255$ - LF-CBM & 0.3335 & 0.9405 & 0.6022 & 0.8117 & 0.5328 & 0.8511 & 0.3798 & 0.9254 \\
% $\rho_u=6/255$ - P-CBM & 0.2217 & 0.9824 & 0.5932 & 0.8150 & 0.3967 & 0.9274 & 0.4149 & 0.9157 \\
$\rho_u=6/255$ - VCT & 0.3361 & 0.9394 & 0.6761 & 0.7650 & 0.5432 & 0.8436 & 0.3625 & 0.9314 \\
\rowcolor{grey!20}
$\rho_u=6/255$ - \textbf{SVCT} & \textbf{0.1354} & \textbf{0.9900} & \textbf{0.5555} & \textbf{0.8359} & \textbf{0.3589} & \textbf{0.9320} & \textbf{0.3257} & \textbf{0.9468} \\
\hline
% $\rho_u=7/255$ - base & 0.3577 & 0.9329 & 0.6403 & 0.7900 & 0.5864 & 0.8203 & 0.3871 & 0.9225 \\
% \rowcolor{grey!20}
% $\rho_u=7/255$ - \textbf{SVCT} & 0.1462 & 0.9883 & 0.6052 & 0.8061 & 0.3995 & 0.9154 & 0.3370 & 0.9422 \\
% \hline
$\rho_u=8/255$ - LF-CBM & 0.3719 & 0.9256 & 0.6707 & 0.7710 & 0.6280 & 0.7947 & 0.3941 & 0.9196 \\
% $\rho_u=8/255$ - P-CBM & 0.2410 & 0.9795 & 0.6632 & 0.7752 & 0.4772 & 0.9057 & 0.4411 & 0.9178 \\
$\rho_u=8/255$ - VCT & 0.4109 & 0.9098 & 0.8114 & 0.6743 & 0.7162 & 0.7328 & 0.3812 & 0.9240 \\
\rowcolor{grey!20}
$\rho_u=8/255$ - \textbf{SVCT} & \textbf{0.1555} & \textbf{0.9867} & \textbf{0.6446} & \textbf{0.7818} & \textbf{0.4383} & \textbf{0.8977} & \textbf{0.3459} & \textbf{0.9387} \\
% \hline
% $\rho_u=9/255$ - base & 0.3881 & 0.9188 & 0.6985 & 0.7528 & 0.6623 & 0.7730 & 0.4001 & 0.9171 \\
% \rowcolor{grey!20}
% $\rho_u=9/255$ - \textbf{SVCT} & 0.1641 & 0.9852 & 0.6798 & 0.7587 & 0.4733 & 0.8801 & 0.3542 & 0.9354 \\
\hline
$\rho_u=10/255$ - LF-CBM & 0.4027 & 0.9123 & 0.7224 & 0.7336 & 0.6906 & 0.7545 & 0.4055 & 0.9145 \\
% $\rho_u=10/255$ - P-CBM & 0.2538 & 0.9775 & 0.7210 & 0.7342 & 0.5318 & 0.8911 & 0.4604 & 0.9111 \\
$\rho_u=10/255$ - VCT & 0.4637 & 0.8844 & 0.8943 & 0.6155 & 0.8057 & 0.6670 & 0.3949 & 0.9179 \\
\rowcolor{grey!20}
$\rho_u=10/255$ - \textbf{SVCT} & \textbf{0.1725} & \textbf{0.9836} & \textbf{0.7096} & \textbf{0.7389} & \textbf{0.5058} & \textbf{0.8625} & \textbf{0.3620} & \textbf{0.9321} \\
\bottomrule
\end{tabular}}
    \label{tab:2}
    % \vspace{-15pt}
\end{table}

\noindent{\bf Perturbations.}
Perturbation refers to small changes or modifications made to input data. In this paper, we introduce perturbations to input images with different radius $\rho_u$ to assess the stability and robustness of the SVCT model. The range of perturbation radii $\rho_u$ is [$6/255$, $10/255$]. We employ the PGD \cite{madry2017towards} algorithm to craft adversarial examples with a step size of 2/255 and a total of 10 steps. As a default, we set the standard deviation $S = 8/255$ for the Gaussian noise in our method. All results are the average score running 10 times to reduce variance.

\noindent{\bf Evaluation metrics.} 
To demonstrate the utility of our approach, we report the classification accuracy on test data for classification tasks. We evaluate our model's stability using Concept Faithfulness Score (CFS) and Concept Perturbation Cosine Similarity (CPCS). CFS measures the stability of model interpretability between two concept weight vectors using Euclidean distance; we use $c_1$ to represent the concept weight vector without perturbation and $c_2$ to represent the concept weight after the perturbation. Then CFS is defined as $\text{CFS} = \|c_2-c_1\| / \|c_1\|$. CPCS measures the cosine similarity between two concept weight vectors, which is defined as $\text{CPCS} = c_1 \cdot c_2 / \|c_1\|\|c_2\|$. The smaller the value of CFS, the less the conceptual weights change after being perturbed, and the more stable the model interpretability is. The closer the value of CPCS is to 1, the higher the similarity of conceptual weights before and after perturbation and the more stable interpretability of the model. More experimental details are in the Appendix \ref{exper}.

\begin{figure}[th]
    \centering    \includegraphics[width=1\linewidth]{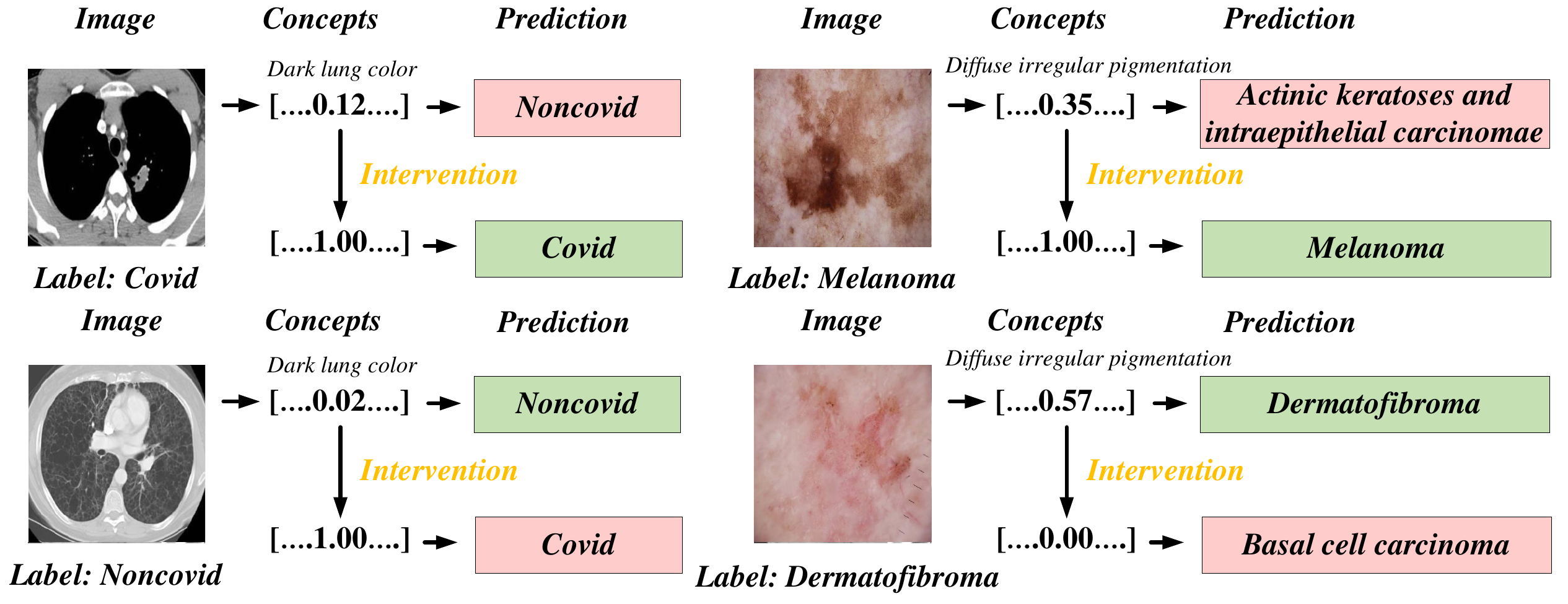}
    \caption{Concept-intervention examples.}
    \label{fig:4}
%% vspace{-10pt}
% vspace{-15pt}
\end{figure}

% \vspace{-7pt}
\subsection{Utility Evaluation} 
Table \ref{tab:1} presents the accuracy results of our proposed SVCT method and the baseline approach on four datasets with different levels of perturbations. The table clearly shows that our method maintains a consistently high accuracy across all datasets without any noticeable variation or loss. This highlights the robustness of our approach in terms of accuracy preservation. Compared to Label-free CBM, our model can maintain higher accuracy while guaranteeing interpretability. Overall, the results in Table \ref{tab:1} show that our SVCT model successfully combines high accuracy and interpretability and maintains stability over multiple datasets.

\begin{table}[thtbp]
    \centering
\caption{Results on sensitivity and specificity for the baselines and SVCT w/w.o perturbation.}
    \label{sen}
\resizebox{1\linewidth}{!}{
    \begin{tabular}{lcccccccc}
    \toprule
    \multirow{2}{*}{\textbf{Method}} & \multicolumn{2}{c}{\textbf{HAM10000}} & \multicolumn{2}{c}{\textbf{Covid19-CT}} & \multicolumn{2}{c}{\textbf{BloodMNIST}} & \multicolumn{2}{c}{\textbf{OCT2017}} \\
    \cline { 2 - 9 }
    & sensitivity &  specificity & sensitivity &  specificity & sensitivity &  specificity & sensitivity &  specificity \\
\midrule
Label-free CBM                  & 0.8878 & 0.9827 & 0.7984 & \textbf{0.8608} & 0.9407 & 0.9956 & 0.9750 & 0.9960 \\
\rowcolor{grey!20}
\textbf{SVCT}                   & \textbf{0.9899} & \textbf{0.9999} & \textbf{0.8191} & 0.8037 & \textbf{0.9667} & \textbf{0.9958} & \textbf{0.9950} & \textbf{0.9994} \\
\hline
$\rho_u=10/255$ - LF CBM          & 0.6779 & 0.9615 & 0.5794 & \textbf{0.9810} & 0.5880 & \textbf{0.9998} & 0.8380 & 0.9880 \\
\rowcolor{grey!20}
$\rho_u=10/255$ - \textbf{SVCT} & \textbf{0.9180} & \textbf{0.9932} & \textbf{0.7136} & 0.9303 & \textbf{0.8681} & 0.9948 & \textbf{0.9790} & \textbf{0.9923} \\
\bottomrule
\end{tabular}}   
% \vspace{-15pt}
%% \vspace{-12pt}
\end{table}

\subsection{Stability Evaluation}
Table \ref{tab:2} illustrates the experimental result for CFS and CPCS, assessing the stability of CBMs across various disturbance radii and comparing it with the baseline models. SVCT demonstrates superior stability concerning conceptual weights, showcasing minimal disparities pre and post-disturbance, signifying notable similarity. The prowess of SVCT in both CFS and CPCS exceeds that of the baseline model. These outcomes imply that SVCT maintains interpretability with robust resistance to perturbation, establishing it as a model with faithful explanations.

In order to represent the experimental results more intuitively, we first visualized the conceptual weight changes before and after the perturbation of each data. The results of these visualizations provide an intuitive explanation of the validity and stability of the SVCT's performance under the perturbation. The results in both Table \ref{tab:2} and Figure \ref{fig:3} amply demonstrate that, compared with the baseline model, the SVCT is a model with superior stability while keeping interpretability to perturbation. These advantages make SVCT valuable in the medical field. Secondly, we also conducted repeated experiments in several conceptual spaces to verify the validity of SVCT. Details can be found in Appendix \ref{more ex}.

\begin{table}[thbp]
    \centering
    \caption{Ablation study of SVCT on DDS module. We assess the efficacy of denoising and smoothing under input perturbations.}
 \resizebox{1\textwidth}{!}{
    \begin{tabular}{lcccccccccc}
    \toprule
    \multirow{2}{*}{\textbf{Method}} & \multicolumn{2}{c}{\textbf{Setting}}&\multicolumn{2}{c}{\textbf{HAM10000}} & \multicolumn{2}{c}{\textbf{Covid19-CT}} & \multicolumn{2}{c}{\textbf{BloodMNIST}} & \multicolumn{2}{c}{\textbf{OCT2017}} \\
    \cmidrule{2-11}
    & Denosing & Smoothing & CFS & CPCS & CFS & CPCS & CFS & CPCS & CFS & CPCS \\
\midrule
\multirow{4}{*}{$\rho_u=6/255$} 
    &            &            & 0.3361 & 0.9394 & 0.6761 & 0.7650 & 0.5432 & 0.8436 & 0.3625 & 0.9314 \\
    &            & \checkmark & 0.3342 & 0.9405 & 0.6490 & 0.7789 & 0.5412 & 0.8462 & 0.3516 & 0.9362 \\
    & \checkmark &            & 0.2689 & 0.9607 & 0.5698 & 0.8221 & 0.3612 & 0.9288 & 0.3367 & 0.9425 \\
    \rowcolor{grey!20}
    & \checkmark & \checkmark & \textbf{0.1354} & \textbf{0.9900} & \textbf{0.5555} & \textbf{0.8359} & \textbf{0.3589} & \textbf{0.9320} & \textbf{0.3257} & \textbf{0.9468} \\
% \hline
% \multirow{4}{*}{$\rho_u=7/255$} 
%     &            &            & 0.3760 & 0.9243 & 0.7635 & 0.7093 & 0.6460 & 0.7816 & 0.3722 & 0.9276 \\
%     &            & \checkmark & 0.3535 & 0.9330 & 0.6893 & 0.7533 & 0.5946 & 0.8125 & 0.3633 & 0.9315 \\
%     & \checkmark &            & 0.2856 & 0.9557 & 0.6139 & 0.7979 & 0.4062 & 0.9012 & 0.3487 & 0.9378 \\
%     \rowcolor{grey!20}
%     & \checkmark & \checkmark & 0.1462 & 0.9883 & 0.6052 & 0.8061 & 0.3995 & 0.9154 & 0.3370 & 0.9422 \\
\hline
\multirow{4}{*}{$\rho_u=8/255$} 
    &            &            & 0.4109 & 0.9098 & 0.8114 & 0.6743 & 0.7162 & 0.7328 & 0.3812 & 0.9240 \\
    &            & \checkmark & 0.3716 & 0.9255 & 0.7258 & 0.7288 & 0.6349 & 0.7862 & 0.3724 & 0.9279 \\
    & \checkmark &            & 0.3020 & 0.9503 & 0.6556 & 0.7710 & 0.4560 & 0.8724 & 0.3574 & 0.9343 \\
    \rowcolor{grey!20}
    & \checkmark & \checkmark & \textbf{0.1555} & \textbf{0.9867} & \textbf{0.6446} & \textbf{0.7818} & \textbf{0.4383} & \textbf{0.8977} & \textbf{0.3459} & \textbf{0.9387} \\
\hline
% \multirow{4}{*}{$\rho_u=9/255$} 
%     &            &            & 0.4388 & 0.8964 & 0.8563 & 0.6406 & 0.7656 & 0.6968 & 0.3890 & 0.9206 \\
%     &            & \checkmark & 0.3876 & 0.9186 & 0.7579 & 0.7070 & 0.6670 & 0.7643 & 0.3805 & 0.9246 \\
%     & \checkmark &            & 0.3168 & 0.9453 & 0.6877 & 0.7507 & 0.4766 & 0.8782 & 0.3645 & 0.9312 \\
%     \rowcolor{grey!20}
%     & \checkmark & \checkmark & 0.1641 & 0.9852 & 0.6798 & 0.7587 & 0.4733 & 0.8801 & 0.3542 & 0.9354 \\
% \hline
\multirow{4}{*}{$\rho_u=10/255$} 
    &            &            & 0.4637 & 0.8844 & 0.8943 & 0.6155 & 0.8057 & 0.6670 & 0.3949 & 0.9179 \\
    &            & \checkmark & 0.4022 & 0.9119 & 0.7856 & 0.6884 & 0.6940 & 0.7453 & 0.3869 & 0.9217 \\
    & \checkmark &            & 0.3306 & 0.9402 & 0.7157 & 0.7320 & \textbf{0.4988} & 0.8421 & 0.3711 & 0.9283 \\
    \rowcolor{grey!20}
    & \checkmark & \checkmark & \textbf{0.1725} & \textbf{0.9836} & \textbf{0.7096} & \textbf{0.7389} & 0.5058 & \textbf{0.8625} & \textbf{0.3620} & \textbf{0.9321} \\
\bottomrule
    \end{tabular}}
    \label{tab:3}
    % \vspace{-10pt}
\end{table}

\subsection{Interpretability Evaluation}
\noindent{\bf Faithfulness and stability.}
SVCT introduces a DDS module while ensuring interpretability, which enables SVCT to provide faithful interpretations, and the results in Table \ref{tab:2} and Figure \ref{fig:3} have shown that the stability performance of SVCT performs even better under input perturbations. Experimental results indicate that SVCT is a faithful model.

\noindent{\bf Test-time intervention.}
We envision that in practical applications, medical experts interacting with the model can intervene to "correct" concept values that the model predicts incorrectly. During the inference process, we initially predict concepts and obtain corresponding concept scores. Subsequently, we intervene by altering concept values and generating output results based on the intervened concepts. In Figure \ref{fig:4}, we present several examples of interventions. In the example, we observed a significant darkening of the lung color, and the model gave an incorrect prediction, which, after our corrections, ended up being correct. When the model predicts correctly, we make the wrong corrections, which likewise causes the model to predict incorrectly. SVCT gives explanations that humans can understand and that humans can modify to achieve co-diagnosis. Besides, our SVCT can also improve its faithfulness in the test-time intervention under perturbations. 

\noindent{\bf Sensitivity and specificity.}
We also conducted sensitivity and specificity experiments on four datasets. Results are shown in Table \ref{sen}. Sensitivity measures the proportion of actual positive cases that are correctly identified by the model and specificity measures the proportion of actual negative cases that are correctly identified by the model. Results show that SVCT consistently outperforms the LF CBM. For the Covid19-CT dataset, while LF CBM has the highest specificity (0.8608), SVCT demonstrates a higher sensitivity (0.8191), suggesting better detection of positive cases. When perturbation ($\rho_u=10/255$), SVCT continues to show robust performance. For example, on the HAM10000 dataset, SVCT maintains high sensitivity (0.9180) and specificity (0.9932). These results demonstrate that SVCT not only performs well under standard conditions but also maintains high accuracy and robustness in the presence of data perturbations, making it a promising method for medical image analysis.

\begin{table}[thbp]
    \centering
\caption{Ablation study of SVCT on DDS module. We assess the efficacy of denoising and smoothing under input perturbations.}
    \resizebox{\linewidth}{!}{
    \begin{tabular}{lcccccccccc}
    \toprule
    \multirow{2}{*}{\textbf{Method}} & \multicolumn{2}{c}{\textbf{Setting}}&\multirow{2}{*}{\textbf{HAM10000}} & \multirow{2}{*}{\textbf{Covid19-CT}} & \multirow{2}{*}{\textbf{BloodMNIST}} & \multirow{2}{*}{\textbf{OCT2017}} \\
    \cmidrule{2-3}
    & Denosing & Smoothing \\
\midrule
\multirow{4}{*}{$\rho_u= 0$} 
    &            &            & $99.00 \%$  & $81.23 \%$ & $96.81 \%$ & $99.40 \%$  \\
    &            & \checkmark & $98.33 \%$  & $80.54 \%$ & $95.88 \%$ & $99.20 \%$  \\
    & \checkmark &            & $98.88 \%$  & $81.09 \%$ & $96.33 \%$ & $99.50 \%$  \\
    \rowcolor{grey!20}
    & \checkmark & \checkmark & $\textbf{99.05\%}$ & $\textbf{81.37\%}$ & $\textbf{96.96\%}$ & $\textbf{99.50\%}$  \\
\hline
\multirow{4}{*}{$\rho_u=10/255$} 
    &            &            & $92.56 \%$ & $68.22 \%$ & $80.59 \%$ & $95.40 \%$  \\
    &            & \checkmark & $92.66 \%$ & $69.10 \%$ & $81.14 \%$ & $97.00 \%$  \\
    & \checkmark &            & $96.11\%$ & $70.03 \%$ & $90.21 \%$ & $98.10 \%$  \\
    \rowcolor{grey!20}
    & \checkmark & \checkmark & $\textbf{97.24\%}$ & $\textbf{71.65\%}$ & $\textbf{92.65\%}$ & $\textbf{98.48\%}$  \\
\bottomrule
    \end{tabular}}
\label{tab:3-acc}
%% \vspace{-5pt}
% \vspace{-15pt}
\end{table}

%On the BloodMNIST dataset, SVCT also outperforms with high sensitivity (0.9667) and specificity (0.9958). Similarly, on the OCT2017 dataset, SVCT achieves the best performance with sensitivity (0.9950) and specificity (0.9994).

%For the Covid19-CT dataset, SVCT shows improved sensitivity (0.7136) and high specificity (0.9303). On the BloodMNIST dataset, SVCT retains high sensitivity (0.8681) and specificity (0.9948). Finally, on the OCT2017 dataset, SVCT maintains high sensitivity (0.9790) and specificity (0.9923). These results demonstrate that SVCT not only performs well under standard conditions but also maintains high accuracy and robustness in the presence of data perturbations, making it a promising method for medical image analysis.

\subsection{Ablation Study}
Results are shown in Table \ref{tab:3} and \ref{tab:3-acc}. The denoising diffusion model and randomized smoothing play an important role in SVCT. When we remove the denoising diffusion model, the performance of the model suffers significantly. While removing the randomized smoothing, the model performance degradation is small. When both modules are removed at the same time, the overall performance of the model decreases more significantly compared to removing a single module. This suggests that these two modules play a key role in maintaining conceptual stability while being able to provide faithful explanations. The ablation results show that without any one of the two modules, the performance of disease diagnosis may suffer. More ablation studies about the effect of feature fusion and DDS are shown in Appendix \ref{ablation study}, indicating that each module in our SVCT is necessary and efficient. The computational cost is shown in Appendix \ref{cost_a}, implying the efficiency of our SVCT.

\section{Conclusion}
In this paper, we propose the Vision Concept Transformer (VCT), and further propose the Stable Vision Concept Transformer (SVCT) framework. In SVCT, we utilize ViT as a backbone, generate the concept layer, and fuse the concept features and image features. SVCT mitigates the information leakage problem caused by CBM and maintains accuracy. Comprehensive experiments show that SVCT can provide stable interpretations despite perturbations to the inputs, with less performance degradation than CBMs and maintaining higher accuracy, indicating SVCT is a more faithful explanation tool.

\section*{Acknowledgements}
This work is supported in part by the funding BAS/1/1689-01-01, URF/1/4663-01-01, REI/1/5232-01-01, REI/1/5332-01-01, and URF/1/5508-01-01 from KAUST, and funding from KAUST - Center of Excellence for Generative AI, under award number 5940.

%
% ---- Bibliography ----
%
% BibTeX users should specify bibliography style 'splncs04'.
% References will then be sorted and formatted in the correct style.
%

\bibliographystyle{splncs04}
\bibliography{ref}

\clearpage
\newpage
\appendix

\section{Preliminaries}
\subsection{Vision Transformers}
In this paper, we adopt the notation introduced in \cite{vaswani2017attention} to describe ViTs. ViTs use only the encoder part of the transformer model for feature extraction. For a given input $x$, ViTs divides $x$ into $n$ patches of the same size. Each patch is first converted into a one-dimensional vector, after which it is transformed into a token embedding, denoted as $X_i \in \mathbb{R}^{d_{model}}$. Token embeddings are then fed into the encoder part of the transformer, which accomplishes the token mixing using a multi-head self-attention mechanism, after which the multi-channel features are combined by MLPs.

\noindent{\bf Token mixing.} 
For input $x$, we denote its corresponding token embedding as $X = [X_1, \cdots, X_n] \in \mathbb{R}^{d_{model} \times n}$, and in the self-attention mechanism, query, keys, and values are all inputs themselves. We denote its dimension as $d_k$, so a linear transformation is needed to obtain the query matrix $Q=W_QX \in \mathbb{R}^{d_{k} \times n}$, the keys matrix $K=W_KX \in \mathbb{R}^{d_{k} \times n}$, and the values matrix $V=W_VX \in \mathbb{R}^{d_{k} \times n}$, where $W_Q, W_K, W_V \in \mathbb{R}^{d_{k} \times d_{model}} $ are learnable weight parameters, After that the process of computing token features by the self-attention module can be expressed as:
\begin{equation}\label{eq:1}
Z^{\top} = \text{self-attention}(X) = \text{softmax}(\frac{Q^{\top}K}{\sqrt{d_k}})V^{\top}W_O
\end{equation}
$Z=[z_1, \cdots, z_n]$ is the extracted token feature and $\frac{1}{\sqrt{d_k}}$ is a scaling factor. It is important to note that after obtaining the output of the self-attention module, it is also necessary to transform it into the input dimensions using a linear mapping, where $W_O \in \mathbb{R}^{d_{k} \times d_{model}}$. The output of the self-attention module goes into the MLP after the layer norm to generate the input for the next block.

\noindent{\bf Prediction.} After stacking multiple blocks, the prediction vectors are output in the last layer of ViTs, and the final prediction can be output after one linear layer. It is worth noting that we input X into the self-attention module, and the final result is denoted as $Z(X)$, and we call $Z(X) \in \mathbb{R}^{n}$ the attention feature vector. Finally, we denote the input of the last linear layer of ViTs as $f(X)$. Note that in the VCT framework, $f(X) \in \mathbb{R}^{d_0} $ is also called the backbone feature or ViTs feature, and $d_0$ is the dimension of the backbone feature.

\subsection{Concept Bottleneck Models}
To introduce the original CBMs, we adopt the notations used by \cite{koh2020concept}. We consider a classification task with a concept set denoted as ${c}=\{p_1, \cdots, p_k\}$ and a training dataset represented as $\{ (x_i, y_i, c_i)   \}_{i=1}^{N}$. Here, for $i\in [N]$, $x_i\in \mathbb{R}^d$ represents the feature vector, $y_i\in \mathbb{R}^{d_z}$ denotes the label (with $d_z$ corresponding to the number of classes), and $c_i\in \mathbb{R}^k$ represents the concept vector. In this context, the $j$-th entry of $c_i$ represents the weight of the concept $p_j$. In CBMs, our goal is to learn two representations: one that transforms the input space to the concept space, denoted as $g: \mathbb{R}^d \to \mathbb{R}^k$, and another that maps the concept space to the prediction space, denoted as $f: \mathbb{R}^k \to \mathbb{R}^{d_z}$. For any input $x$, we aim to ensure that its predicted concept vector $\hat{c}=g(x)$ and prediction $\hat{y}=f(g(x))$ are close to their underlying counterparts, thus capturing the essence of the original CBMs.

\section{Notations}
We present our detailed notations in Table \ref{tab:notions}.

\begin{table}[thbp]
\centering
\caption{Notations.}
\resizebox{1\linewidth}{!}{
\begin{tabular}{clcl}
\toprule
\textbf{Notation} & \textbf{Remark} & \textbf{Notation} & \textbf{Remark}  \\
\midrule
$x$ & Input image & $n$ &  \# of patches\\
$X$ & Token embeddings &  $Q, K, V$ & Query,keys,values matrix\\
$W_Q,W_K,W_V,W_O$ & Linear mapping weights & $Z$ & Token feature \\
$Z(X)$& Attention feature vector & $f(X)$ & Backbone feature \\
$\mathcal{C}$ & Concept set & $\mathcal{D}$ & Training dataset\\
$\mathcal{T}$ & Token embedding of $\mathcal{D}$ & $A$ & Activation matrix \\
$E_I$ & CLIP image encoder & $E_T$ & CLIP text encoder  \\
$M$ & \# of concepts & $N$ & \# of data  \\
$f_c(X)$ & Concept feature & $f_m(X)$ & Hybrid features \\
$W_F$ & Weights of final predictor & $V_k$ & Top-k ratio\\ 
$g$ &  Concept module & $R$ & Stable radius \\
$\bar{y}(X)$ & Prediction distribution based on $g(X)$ & $D$ & Rényi divergence\\
$\gamma$ & Similarity coefficient, & $\beta$ & Stability coefficient\\
$\|\cdot\|$ & $\ell_2$-norm or $\ell_{\infty}$-norm& $T$ & Denoised diffusion method\\
$F$ & Concept module for VCT  & $\tilde{w}(x)$ & Stable concept module\\
$\mathcal{N}\left(0, \sigma^2 I\right)$ & Randomized Gaussian noise & $X_{\mathrm{rs}}$ & Noise model of randomize smoothing\\
$X_t$ & Noise model of diffusion models & $t$ & Time step of diffusion model\\
$\beta_t$ & Constant related to time step $t$ & $X_{t^*}$   & Noise model of time step $t^*$ \\
$\hat{X}$ & Denoised sample & $\rho_u$ & Radius of perturbations\\
\bottomrule
\end{tabular}}
% \vspace{-15pt}
\label{tab:notions}
\end{table}

\section{Omitted Proofs}
\label{proof}
We first give the definition of the $\alpha$-R\'{e}nyi divergence. Then, if the prediction distribution is robust under $\alpha$-R\'{e}nyi divergence, then the prediction will be robust under input perturbations \cite{li2019certified}.
\begin{definition}
Given two probability distributions $P$ and $Q$, and  $\alpha\in (1,\infty)$, the $\alpha$-R\'{e}nyi divergence is defined as 
\begin{equation*}
    D_\alpha(P||Q) = \frac{1}{\alpha-1} \log \mathbb{E}_{X\sim Q}(\frac{P(X)}{Q(X)})^\alpha.
\end{equation*}
\end{definition}

\subsection{Proof of Theorem \ref{thm:5.1}}
\begin{proof}
Firstly, we know that the $\alpha$-R\'{e}nyi divergence between two Gaussian distributions $\mathcal{N}(0, \sigma^2 I_d)$ and $\mathcal{N}(\mu, \sigma^2 I_d)$  is bounded by $\frac{\alpha\|\mu\|_2^2}{2\sigma^2}$. Thus, by the post-processing property of R\'{e}nyi divergence, we have 
\begin{equation*}
\begin{aligned}
     & D_\alpha (\tilde{w}(X), \tilde{w}(X'))=D_\alpha (f_c(T(X+S)), f_c(T(X'))) 
     \leq D_\alpha(X+S, X'+S) \\
     & \leq \frac{\alpha\|X-X'\|_F^2}{2\sigma^2}\leq \frac{\alpha R^2}{2\sigma^2}. 
\end{aligned}
\end{equation*}
Thus, when $\frac{\alpha R^2}{2\sigma^2}\leq \gamma$ it satisfies the utility robustness. 

Second, we show it satisfies the prediction robustness. We first recall the following lemma which shows a lower bound between the R\'{e}nyi divergence of two discrete distributions:

\begin{lemma}[R\'{e}nyi Divergence Lemma \cite{li2019certified}]\label{lemma:renyi}
Let $P=(p_1, p_2, ..., p_k)$ and $Q=(q_1, q_2, ..., q_k)$ be two multinomial distributions. If the indices of the largest probabilities \textbf{do not} match on $P$ and $Q$, then the R\'{e}nyi divergence between $P$ and $Q$, {\em i.e.,} $D_\alpha(P||Q)$\footnote{For $\alpha\in (1,\infty)$, $D_\alpha(P||Q)$ is defined as $D_\alpha(P||Q) = \frac{1}{\alpha-1} \log \mathbb{E}_{X\sim Q}(\frac{P(X)}{Q(X)})^\alpha$.}, satisfies
\begin{align}
        D_\alpha(P||Q) 
       \geq -\log(1 - p_{(1)} - p_{(2)} + 2(\frac{1}{2}(p_{(1)}^{1-\alpha} + p_{(2)}^{1-\alpha}))^{\frac{1}{1-\alpha}}).
    \nonumber 
\end{align}
where $p_{(1)}$ and $p_{(2)}$ refer to the largest and the second largest probabilities in $\{p_i\}$, respectively.
\end{lemma}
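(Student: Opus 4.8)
The plan is to read the inequality as the value of a constrained optimization problem: among all categorical distributions whose modes (argmax indices) disagree, and with the two largest masses of the reference distribution fixed at $p_{(1)}$ and $p_{(2)}$, I would find the configuration that minimizes the Rényi divergence and show that this minimum equals the stated right-hand side. Since $u\mapsto\frac{1}{\alpha-1}\log u$ is increasing for $\alpha>1$, it suffices to minimize the inner expectation $\sum_{i} q_i^{\alpha}\,p_i^{1-\alpha}$ (the quantity inside $D_\alpha$, with the reference distribution $p$ held fixed) over a free distribution $Q$ subject to $\sum_i q_i=1$, $q_i\ge 0$, and the mode-mismatch constraint. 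Relabel coordinates so that the reference mode is index $1$ with $p_1=p_{(1)}$, $p_2=p_{(2)}$; the mismatch constraint says there is an index $j\neq 1$ with $q_j\ge q_1$, and the minimum is attained on the boundary of this constraint, where $q_j=q_1$.

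First I would argue that the cheapest way to satisfy the constraint is to take the competing index to be $j=2$, the coordinate carrying the second-largest reference mass: in the objective $q_j$ enters with weight $p_j^{1-\alpha}$, which for $\alpha>1$ is \emph{decreasing} in $p_j$, so promoting the coordinate with the largest available $p_j$ incurs the smallest divergence penalty. Next, holding the active pair $\{1,2\}$ tied at a common level with total mass $c$, I would allocate the remaining mass $1-c$ over the tail coordinates $i\ge 3$ by a Lagrange/KKT (equivalently Hölder) argument, obtaining the proportional match $q_i\propto p_i$. Substituting this back collapses the $k$-dimensional problem to a single scalar problem: minimize $(c/2)^{\alpha}\big(p_{(1)}^{1-\alpha}+p_{(2)}^{1-\alpha}\big)+(1-c)^{\alpha}\,(1-p_{(1)}-p_{(2)})^{1-\alpha}$ over $c\in(0,1)$.

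The power mean surfaces in the first-order condition for this scalar problem: setting the derivative to zero yields $\frac{c}{1-c}=\frac{2m}{\,1-p_{(1)}-p_{(2)}\,}$ with $m=\big(\tfrac12(p_{(1)}^{1-\alpha}+p_{(2)}^{1-\alpha})\big)^{1/(1-\alpha)}$, and back-substitution telescopes the objective to $(1-p_{(1)}-p_{(2)}+2m)^{1-\alpha}$, so $\frac{1}{\alpha-1}\log(\cdot)=-\log(1-p_{(1)}-p_{(2)}+2m)$, exactly the claimed bound. The main obstacle is rigor in the structural reduction rather than the final algebra: I must justify (i) that the binding competitor is the second-largest coordinate, (ii) that the two leading coordinates are exactly tied at the optimum (the constraint is active and an equality), (iii) that the tail matches proportionally, and (iv) that the scalar critical point is a global minimum and that KKT feasibility holds there, in particular that the tied level dominates the proportionally-scaled tail so the mode genuinely leaves index $1$. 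Convexity of $q\mapsto q^{\alpha}$ for $\alpha>1$ makes the tail allocation and the scalar problem convex, which lets the first-order conditions certify a global minimum; the one genuinely delicate step is the competitor-selection (i), which I would settle by a direct exchange argument comparing the divergence cost of promoting any other coordinate against promoting the second-largest.
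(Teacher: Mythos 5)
Your optimization route is correct, and it is essentially \emph{the} proof of this lemma --- but note that the paper itself never proves this statement: it is imported verbatim from \cite{li2019certified}, and the only argument of this kind carried out inside the paper is the top-$k$ analogue (Lemma \ref{lemma:4.3}, proved via the exchange statements of Lemma \ref{lemma:4.4}), which follows exactly your template: exchange arguments pinning down the structure of the minimizer (which coordinates are promoted, the active tie), a Lagrangian solve yielding a proportional tail, and a scalar problem whose first-order condition telescopes to the closed form. So your checklist (i)--(iv) is the right one, and your one flagged worry in (iv) resolves cleanly: at your critical point $c/(1-c)=2m/(1-p_{(1)}-p_{(2)})$ the tail levels are $q_i=p_i(1-c)/(1-p_{(1)}-p_{(2)})$, so $q_i/(c/2)=p_i/m\le 1$ because the $(1-\alpha)$-power mean $m$ of $p_{(1)},p_{(2)}$ lies between $p_{(2)}$ and $p_{(1)}$ while every tail mass is at most $p_{(2)}$; moreover, feasibility of the mode-mismatch constraint would hold even without this check, since a tail coordinate overtaking the tied pair still moves the argmax away from index $1$.

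One discrepancy you should be aware of, though it is a defect of the transcribed statement rather than of your argument: your inner objective $\sum_i q_i^\alpha p_i^{1-\alpha}$ equals $e^{(\alpha-1)D_\alpha(Q\|P)}$, whereas the lemma, read literally with its footnote ($\mathbb{E}_{X\sim Q}(P/Q)^\alpha=\sum_i p_i^\alpha q_i^{1-\alpha}$), asserts the bound for $D_\alpha(P\|Q)$. R\'enyi divergence is asymmetric, and only your orientation is true: for $\alpha=2$, $p=(0.9,0.1)$, $q=(0.49,0.51)$, one has $D_2(P\|Q)=\log(0.81/0.49+0.01/0.51)\approx 0.51$, strictly below the claimed bound $-\log(0.36)\approx 1.02$, while $D_2(Q\|P)=\log(0.49^2/0.9+0.51^2/0.1)\approx 1.05$ satisfies it --- and indeed your final value $-\log(1-p_{(1)}-p_{(2)}+2m)$ is exactly $\min_Q D_\alpha(Q\|P)$ over mismatched $Q$, as your back-substitution shows. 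This argument-order slip is inherited from how the lemma is quoted; it is harmless downstream because the lemma is applied after a Gaussian post-processing bound $\alpha R^2/(2\sigma^2)$ that is symmetric in the two inputs, but your proof should be stated as establishing $D_\alpha(Q\|P)\ge -\log\bigl(1-p_{(1)}-p_{(2)}+2(\tfrac{1}{2}(p_{(1)}^{1-\alpha}+p_{(2)}^{1-\alpha}))^{1/(1-\alpha)}\bigr)$. With that orientation fixed, the remaining work you identify --- the competitor-selection exchange in (i) --- goes through by the same pairwise-swap computation the paper uses in Lemma \ref{lemma:4.4}: swapping the promoted coordinate with one of larger reference mass changes the objective by a product of two factors of opposite sign, so promoting the second-largest coordinate is optimal.
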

By Lemma \ref{lemma:renyi} we can see that as long as $  D_\alpha(\tilde{w}(X), \tilde{w}(X')) \leq -\log(1 - p_{(1)} - p_{(2)} + 2(\frac{1}{2}(p_{(1)}^{1-\alpha} + p_{(2)}^{1-\alpha}))^{\frac{1}{1-\alpha}})$ we must have  the prediction robustness. Thus, if $\frac{\alpha R^2}{2\sigma^2}\leq -\log(1 - p_{(1)} - p_{(2)} + 2(\frac{1}{2}(p_{(1)}^{1-\alpha} + p_{(2)}^{1-\alpha}))^{\frac{1}{1-\alpha}})$  we have the condition. 

Finally, we prove the Top-$k$ robustness. Motivated by \cite{liu2021certifiably,hu2023improving}, we proof the following lemma first 
\begin{lemma}\label{lemma:4.3}
Consider the set of all vectors with unit $\ell_1$-norm in $\mathbb{R}^T$, $\mathcal{Q}$. Then we have 
\begin{equation*}
\begin{aligned}
      \min_{q\in \mathcal{Q}, V_k(\hat{w}, q)\geq \beta} 
      D_\alpha(\hat{w}, q) =\frac{\alpha}{\alpha-1}\ln(2k_0(\sum_{i\in \mathcal{S}}\tilde{w}^\alpha_i)^\frac{1}{\alpha} 
       +(2k_0)^\frac{1}{\alpha}\sum_{i\not\in \mathcal{S}}\tilde{w}_i)-\frac{1}{\alpha-1}\ln (2k_0), 
\end{aligned}
\end{equation*}
where $D_\alpha(\hat{w}, q)$ is the $\alpha$-divergence of the distributions whose probability vectors are $\hat{w}$ and $q$. 
\end{lemma}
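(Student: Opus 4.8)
The plan is to turn the Rényi-divergence minimization into an explicit convex program and solve it in two stages: first reduce the combinatorial top-$k$ constraint to a single canonical ``boundary swap,'' then solve the resulting smooth problem by Lagrange multipliers. Throughout I assume, as the notation $\tilde{w}_{i^*}$ suggests, that the coordinates are sorted so that $\tilde{w}_1 \geq \tilde{w}_2 \geq \cdots \geq \tilde{w}_T$ and identify $\hat{w}$ with this sorted vector. The feasible set is taken to be those probability vectors $q$ whose top-$k$ overlap with $\hat{w}$ has been driven out of the $\beta$-stable region (this is the object that feeds into Theorem \ref{thm:5.1}: the minimal divergence needed to break $\beta$-stability). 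Since $\alpha > 1$, the map $t \mapsto \frac{1}{\alpha-1}\log t$ is increasing, so minimizing $D_\alpha(\hat{w}, q) = \frac{1}{\alpha-1}\log \sum_i \hat{w}_i^{\alpha} q_i^{1-\alpha}$ is equivalent to minimizing $f(q) = \sum_i \hat{w}_i^{\alpha} q_i^{1-\alpha}$. Because $1-\alpha < 0$, each term $q_i^{1-\alpha}$ is convex and decreasing, so $f$ is convex; its unconstrained normalized minimizer is $q \propto \hat{w}$, and it is precisely the top-$k$ constraint that forces a strictly positive optimal value.

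The combinatorial core is to show that among all ways to break $V_k(\hat{w}, q) \geq \beta$, the minimum of $f$ is attained by displacing exactly the last $k_0$ indices of the top-$k$ block and promoting the first $k_0$ indices outside it, i.e.\ by perturbing only the $2k_0$ coordinates in $\mathcal{S}$. I would establish this by an exchange argument: given any feasible $q$ that demotes some top-$k$ index $a$ while leaving a lower-ranked top-$k$ index $b$ untouched, reassigning the roles of $a$ and $b$ cannot increase $f$, because $\hat{w}_a \geq \hat{w}_b$ and $q_i^{1-\alpha}$ is decreasing; a symmetric swap applies to the promoted indices. Iterating these non-increasing swaps pushes all displaced coordinates onto the boundary set $\mathcal{S}$, reducing the problem to the single swap pattern supported on $\mathcal{S}$.

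For that fixed pattern, feasibility requires the $k_0$ promoted coordinates to end up weakly above the $k_0$ demoted ones; since convex minimization drives active constraints tight and the natural $\hat{w}$-ordering is the opposite, the optimum equalizes all $2k_0$ boundary coordinates to one common value $c$, leaving the coordinates off $\mathcal{S}$ free. Solving $\min f$ subject to $\sum_i q_i = 1$ and $q_i = c$ on $\mathcal{S}$ by Lagrange multipliers yields $q_i = \kappa \hat{w}_i$ for $i \notin \mathcal{S}$ and $c = \kappa (2k_0)^{-1/\alpha}(\sum_{i\in\mathcal{S}} \tilde{w}_i^{\alpha})^{1/\alpha}$, with $\kappa$ fixed by normalization. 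Substituting back collapses the objective to $f(q^{\star}) = A^{\alpha}$, where $A = (2k_0)^{1-1/\alpha}(\sum_{i\in\mathcal{S}} \tilde{w}_i^{\alpha})^{1/\alpha} + \sum_{i\notin\mathcal{S}} \tilde{w}_i$; applying $\frac{1}{\alpha-1}\log(\cdot)$ and factoring $(2k_0)^{1/\alpha}$ out of $A$ rewrites $\frac{\alpha}{\alpha-1}\log A$ exactly as $\frac{\alpha}{\alpha-1}\ln(2k_0(\sum_{i\in\mathcal{S}}\tilde{w}_i^{\alpha})^{1/\alpha} + (2k_0)^{1/\alpha}\sum_{i\notin\mathcal{S}}\tilde{w}_i) - \frac{1}{\alpha-1}\ln(2k_0)$, the claimed value.

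The routine parts are the Lagrange computation and the final algebraic rearrangement. The delicate step, where I expect to spend the most care, is the reduction in the second paragraph: rigorously certifying that the cheapest violation is the boundary swap on $\mathcal{S}$, and that the crossing constraint is tight with all $2k_0$ coordinates tied at a single level $c$ rather than matched in pairs at different levels. This requires combining the exchange/rearrangement inequality with the convexity of $f$ and careful bookkeeping of which $q$-coordinates actually occupy the top-$k$ of $q$; the decreasing behaviour of $q \mapsto q^{1-\alpha}$ together with the sorted order of $\hat{w}$ are the two facts that make every exchange non-increasing.
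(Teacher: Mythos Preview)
Your proposal is correct and follows essentially the same route as the paper: the paper encapsulates your exchange/swap reduction in an auxiliary lemma (their Lemma \ref{lemma:4.4}) establishing that at the minimizer exactly $k_0$ indices are displaced, that they are precisely $\{k-k_0+1,\dots,k\}$ and $\{k+1,\dots,k+k_0\}$, and that $q$ is nonincreasing, after which they set up the same constrained problem with all $q_i$ equal on $\mathcal{S}$ and solve it by Lagrange multipliers to obtain the same closed form. The only minor addition in the paper is an explicit monotonicity constraint $q_i \leq q_j$ for $i\geq j$ in the Lagrangian, but as you anticipate this is inactive at the optimum.
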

Now we get back to the proof, we know that $D_\alpha(X+S, X'+S)\leq \frac{\alpha R^2}{2\sigma^2}$. And $D_\alpha(f_c(T(X+S)), T(f_c((X'+S)))\leq D_\alpha(X+S, X'+S)$. Thus, if $\frac{\alpha R^2}{2\sigma^2}\leq \frac{\alpha}{\alpha-1}\ln(2k_0(\sum_{i\in \mathcal{S}}\tilde{w}^\alpha_i)^\frac{1}{\alpha}+(2k_0)^\frac{1}{\alpha}\sum_{i\not\in \mathcal{S}}\tilde{w}_i)-\frac{1}{\alpha-1}\ln (2k_0)$, we must have $V_k(\tilde{w}(X), \tilde{w}(X'))\geq \beta$. 
\end{proof}

\begin{proof}[Proof of Lemma \ref{lemma:4.3}]
We denote $m^T=(m_1, m_2, \cdots, m_T)$ and $q^T=(q_1, \cdots, q_T)$. W.l.o.g we assume that $m_1\geq \cdots\geq m_T$.  Then,  to reach the minimum of R\'{e}nyi
divergence we show that the minimizer $q$ must satisfies $q_1\geq \cdots \geq q_{k-k_0-1}\geq q_{k-k_0}=\cdots=q_{k+k_0+1}\geq q_{k+k_0+2}\geq q_T$.  We need the following statements for the proof. 
\begin{lemma} \label{lemma:4.4}
We have the following statements: 
\begin{enumerate}
    \item 
To reach the minimum, there are exactly $k_0$ different components in the top-$k$ of $\tilde{w}$ and $q$. 
\item To reach the minimum, $q_{k-k_0+1}, \cdots, q_k$ are not in the top-$k$ of $q$. 
\item  To reach the minimum, $q_{k+1}, \cdots, q_{k+k_0}$  must appear in the top-$k$
of $q$. 
\item \cite{li2019certified} To reach the minimum, we must have $q_i\geq q_j$ for all $i\leq j$. 
\end{enumerate}
\end{lemma}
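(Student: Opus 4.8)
The plan is to reduce all four statements to a single rearrangement (exchange) argument on a monomial objective. First I would observe that since $\alpha>1$, the map $t\mapsto \frac{1}{\alpha-1}\log t$ is increasing, so minimizing $D_\alpha(\tilde w\|q)=\frac{1}{\alpha-1}\log\big(\sum_i \tilde w_i^\alpha q_i^{1-\alpha}\big)$ over the feasible set is equivalent to minimizing $G(q):=\sum_{i}\tilde w_i^\alpha q_i^{1-\alpha}$. Because $1-\alpha<0$, any feasible minimizer must have $q_i>0$ whenever $\tilde w_i>0$ (else $G=+\infty$), so I may work in the open region where $G$ is smooth and each summand $q_i^{1-\alpha}$ is strictly decreasing in $q_i$. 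I would also pin down the feasible set over which this minimum is meaningful (and over which the certified threshold in Theorem~\ref{thm:5.1} is computed): it is the set of probability vectors $q$ that \emph{violate} the overlap, i.e. whose top-$k$ index set differs from $T_k(\tilde w)=\{1,\dots,k\}$ in at least $k_0$ positions, where $k_0=\lfloor(1-\beta)k\rfloor+1$ is precisely the minimum number of changes needed to break the $\beta$-overlap.

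The single workhorse is a two-slot rearrangement inequality. If $a\ge b\ge 0$ and $u\ge v>0$, then since $1-\alpha<0$ we have $(a^\alpha-b^\alpha)\ge 0$ and $(u^{1-\alpha}-v^{1-\alpha})\le 0$, hence their product is nonpositive, which rearranges to
\[ a^\alpha u^{1-\alpha} + b^\alpha v^{1-\alpha} \le a^\alpha v^{1-\alpha} + b^\alpha u^{1-\alpha}. \]
In words: assigning the larger $q$-value to the larger $\tilde w$-weight never increases $G$. Statement~(4), the global monotonicity $q_i\ge q_j$ for $i\le j$, is exactly the conclusion of applying this inequality to every inverted pair of coordinates, and I would simply cite \cite{li2019certified} for it and then invoke it as a known structural fact.

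Each of (1)--(3) is then an instance of the same inequality across the top-$k$/non-top-$k$ boundary, argued by contradiction, where the exchange is realized by literally swapping the two coordinates' $q$-values (which automatically preserves $\sum_i q_i=1$ and positivity). For (2), if the minimizer keeps a low index $i'\in\{k-k_0+1,\dots,k\}$ inside $T_k(q)$ while evicting a higher index $i^*\le k-k_0$, then $\tilde w_{i^*}\ge\tilde w_{i'}$ and swapping their values moves the larger weight onto the larger value, strictly lowering $G$ while keeping the number of changed top-$k$ indices fixed; this contradicts minimality, forcing the evicted set to be exactly the smallest $k_0$ of the top block. Claim (3) is the mirror image: if a low index $j^*>k+k_0$ is promoted into $T_k(q)$ while a higher index $j'\in\{k+1,\dots,k+k_0\}$ is left out, the same value-swap lowers $G$, so the promoted indices must be exactly $\{k+1,\dots,k+k_0\}$. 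For (1), if strictly more than $k_0$ indices were changed, I would undo one change -- restore an original top-$k$ index (large $\tilde w$) to $T_k(q)$ and demote one promoted index (small $\tilde w$) -- which again lowers $G$ by the rearrangement inequality while leaving at least $k_0$ changes, so the optimum changes exactly $k_0$.

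I expect the main obstacle to be the bookkeeping at the top-$k$ threshold rather than the inequality itself. Each claimed swap must be an admissible move of the continuous $q$-values that genuinely carries the swapped coordinates across the $k$-th-largest value of $q$, and I must verify that the violation constraint (at least $k_0$ changed indices) is exactly maintained after every exchange, never accidentally dropping the count below $k_0$ and losing feasibility. The delicate regime is ties at the threshold -- the configuration in which the $2k_0$ boundary coordinates in $\mathcal{S}$ become equalized in the minimizer that Lemma~\ref{lemma:4.3} ultimately uses -- which I would handle as a limiting/degenerate case of the strict inequalities above, checking that the nonstrict inequality still certifies optimality when the relevant weights or values coincide.
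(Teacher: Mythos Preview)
Your proposal is correct and follows essentially the same route as the paper: both reduce to minimizing $G(q)=\sum_i \tilde w_i^\alpha q_i^{1-\alpha}$ (the paper writes this as $e^{(\alpha-1)D_\alpha}$) and then argue each item by a two-coordinate swap, factoring the change as $(\tilde w_a^\alpha-\tilde w_b^\alpha)(q_b^{1-\alpha}-q_a^{1-\alpha})$, which is exactly your rearrangement inequality. The only cosmetic difference is in item~(3): the paper deduces it from (2) together with the monotonicity of (4), whereas you run the mirror exchange directly; both are valid and equally short.
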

Thus, based on Lemma \ref{lemma:4.4}, we only need to solve the following optimization problem to find a minimizer $q$: 
\begin{align*}
   & \min_{q_1, \cdots, q_T}=\sum_{i=1}^T q_i (\frac{\tilde{w}_i }{q_i})^\alpha \\ 
   &\textbf{s.t. } \sum_{i=1}^T q_i=1 \\
   & \textbf{s.t. } q_i\leq q_j, i\geq j \\
   & \textbf{s.t. } q_i\geq 0\\
   & \textbf{s.t. } q_i-q_j=0, \forall i, j\in \mathcal{S}=\{k-k_0+1, \cdots, k+k_0\} 
\end{align*}
Solve the above optimization by using the Lagrangian method, and we can get 
\begin{equation}
    q_i=\frac{s}{2k_0 s+(2k_0)^\frac{1}{\alpha}\sum_{i\not\in \mathcal{S}}\tilde{w}_i}, \forall i\in \mathcal{S}, 
\end{equation}
\begin{equation}
     q_i=\frac{(2k_0)^\frac{1}{\alpha}\tilde{w}_i}{2k_0 s+(2k_0)^\frac{1}{\alpha}\sum_{i\not\in \mathcal{S}}\tilde{w}_i}, \forall i\not\in \mathcal{S}
\end{equation}
where $s=(\sum_{i\in \mathcal{S}}\tilde{w}^\alpha_i)^\frac{1}{\alpha}$. We can get in this case $D_\alpha(\tilde{w}, q)=\frac{\alpha}{\alpha-1}\ln(2k_0s+(2k_0)^\frac{1}{\alpha}\sum_{i\not\in \mathcal{S}}\tilde{w}_i)-\frac{1}{\alpha-1}\ln (2k_0) $. 

\end{proof}

\begin{proof}[Proof of Lemma \ref{lemma:4.4}]
We first proof the first item: 

Assume that $i_1, \cdots, i_{k_0+j}$ are the $j$ components in the top-k of $\tilde{w}$ but not 
in the top-k of $q$, and  $i'_1, \cdots, i'_{k_0+j}$ 
are the components  in the top-k of q but
not in the top-k of  $\tilde{w}$. Consider we have another vector $q^1$ with
the same value with $q$ while replace $q_{i_{k_0+j}}$ with $q_{i'_{k_0+j}}$. Thus we have 
\begin{align*}
    &e^{(\alpha-1) D_\alpha(\tilde{w}, q^1)}-  e^{(\alpha-1)D_\alpha(\tilde{w}, q)} \\
    & = (\frac{\tilde{w}^\alpha_{i_{k_0+j}} }{q^{\alpha-1
    }_{i'_{k_0+j}}}+\frac{\tilde{w}^\alpha_{i'_{k_0+j}} }{q^{\alpha-1
    }_{i_{k_0+j}}})-(\frac{\tilde{w}^\alpha_{i_{k_0+j}} }{q^{\alpha-1
    }_{i_{k_0+j}}}+\frac{\tilde{w}^\alpha_{i'_{k_0+j}} }{q^{\alpha-1
    }_{i'_{k_0+j}}})\\
    &=(\tilde{w}^\alpha_{i_{k_0+j}} -\tilde{w}^\alpha_{i'_{k_0+j}})(\frac{1}{q^{\alpha-1
    }_{i'_{k_0+j}}}-\frac{1}{q^{\alpha-1
    }_{i_{k_0+j}}})<0,
\end{align*}
since $\tilde{w}_{i_{k_0+j}}\geq \tilde{w}_{i'_{k_0+j}}$ and $q_{i'_{k_0+j}}\geq q _{i_{k_0+j}}$.  Thus, we know reducing the number of misplacement in
top-k can reduce the value $D_\alpha(\tilde{w}, q)$ which contradict to $q$ achieves the minimal. Thus we must have $j=0$. 

We then proof the second statement. 

Assume that $i_1, \cdots, i_{k_0}$ are the $k_0$ components in the top-k of $\tilde{w}$ but not 
in the top-k of $q$, and  $i'_1, \cdots, i'_{k_0}$ 
are the components  in the top-k of q but
not in the top-k of  $\tilde{w}$.  Consider we have another unit $\ell_1$-norm vector $q^2$  with the
same value with $q$ while $q_{i_j}$ is replaced by $q_{j'}$ where $\tilde{w}_{j'}\geq \tilde{w}_{i_j}$ and $j'$ is in the top-k component of $q$ (there must exists such index $j'$). Now we can see that $q^2_{j'}$ is no longer a top-k component of $q^2$ and $q^2_{i_j}$ is a top-k component.  Thus we have 
\begin{align*}
   &e^{(\alpha-1) D_\alpha(\tilde{w}, q^2)}-  e^{(\alpha-1) D_\alpha(\tilde{w}, q)} \\
   & = (\frac{\tilde{w}^\alpha_{i_{j}} }{q^{\alpha-1
    }_{j'}}+\frac{\tilde{w}^\alpha_{j'} }{q^{\alpha-1
    }_{i_{j}}})-(\frac{\tilde{w}^\alpha_{i_{j}} }{q^{\alpha-1
    }_{i_{j}}}+\frac{\tilde{w}^\alpha_{j'} }{q^{\alpha-1
    }_{j'}})\\
    &=(\tilde{w}^\alpha_{i_{j}} -\tilde{w}^\alpha_{j'})(\frac{1}{q^{\alpha-1
    }_{j'}}-\frac{1}{q^{\alpha-1
    }_{i_{j}}})\geq 0. 
\end{align*}
Now we back to the proof of the statement. We first proof $q_k$ is not in the top-k of $q$. If not, that is $k\not\in \{i_1, \cdots, i_{k_0}\}$ and all $i_j<k$. Then we can always find an $i_j<k$ such that $\tilde{w}_k\leq \tilde{w}_{i_j}$, we can find a vector $\tilde{q}$ by replacing $q_{i_j}$ with $q_k$. And we can see that $   D_\alpha(\tilde{w}, \tilde{q})-  D_\alpha(\tilde{w}, q)\leq 0$, which contradict to that $q$ is the minimizer. 

We then proof $q_{k-1}$ is not in the top-k of $q$. If not we can construct $\tilde{q}$ by replacing $q_{k}$ with $q_{k-1}$. Since $q_k$ is not in top-k and $\tilde{w}_{k}\leq \tilde{w}_{k-1}$. By the previous statement we have $   D_\alpha(\tilde{w}, \tilde{q})-  D_\alpha(\tilde{w}, q)\leq 0$, which contradict to that $q$ is the minimizer. Thus, $q_{k-1}$ is not in the top-k of $q$. We can thus use induction to proof statement 2. 

Finally we proof statement 3. We can easily show that $q_{i}\geq q_{k+1}$ for $i\leq k$, and $q_{i}\leq q_{k+1}$ for $i\geq k+2$. Thus, $q_1, \cdots, q_k$ are greater than the left entries. Since by Statement 2 we have $q_{k-k_0}, \cdots q_k$ are not top $k$. Thus we must have $q_{k+1}, \cdots q_{k+k_0} $ must be top-k of $q$. 

\end{proof}

\section{Label-free CBM}
\label{concept set}
\subsection{Concept Set Generation}
In the original CBM paper \cite{koh2020concept}, the generation of concepts set was decided by experts within the application domain, which required a great deal of expertise. Our goal was to enable the entire process to be automated, so we used GPT-3 \cite{brown2020language} via the OpenAI API to generate concept sets. Since GPT-3 is stocked with a great deal of expertise in the medical domain when it is correctly questioned, it is possible to efficiently output important features for recognizing a certain category. In this paper, we ask GPT-3 the following questions:
\begin{itemize}
  \item 
  List the most important features for recognizing something as a \{dataset-image-class\} of \{class\}:    
  \item
  List the things most commonly seen around a \{class\}:
  \item
  Give superclasses for the word \{class\}:
\end{itemize}
Note that \{dataset-image-class\} refers to the medical image type in the corresponding dataset, e.g., CT, etc., and \{class\} corresponds to the category in the image classification task. For GPT-3 to perform well on the above prompt, we provide two examples of the desired outputs for few-shot adaptation. Note that those two examples can be shared across all datasets, so no additional user input is needed to generate a concept set for a new dataset. To reduce variance, we run each prompt three times and combine the results. Combining the concepts received from different classes and prompts gives us a large, somewhat noisy set of initial concepts. Specific examples can be found in Appendix \ref{example}.

\subsection{Concept Set Filtering}
After obtaining the initial set of concepts, we need to use several filters to filter the initial concepts set to improve the quality of the concepts set. See more details are in \cite{oikarinen2023labelfree}. The process of filtering consists of the following main aspects:
\begin{itemize}
  \item [(1)] 
  (\textit{Concept length}) Because of the relatively long description of features in medical imaging, we delete any concept longer than 40 characters in length to keep the concept simple and avoid redundant complexity.   
  \item [(2)]
  (\textit{Remove concepts too similar to classes}) We remove all concepts that are too similar to the names of target classes. We measure this with cosine similarity in a text embedding space. In particular, we use an ensemble of similarities in the CLIP ViT-B/16 text encoder as well as the all-mpnet-base-v2 sentence encoder space, so our measure can be seen as a combination of visual and textual similarity. We deleted concepts with similarity $>$ 0.85 for all datasets to any target class.
  \item [(3)]
  (\textit{Remove concepts too similar to each other}) We use the same embedding space as above and remove any concept that has another concept with $>$ 0.9 cosine similarity already in the concept set.
  \item [(4)]
  (\textit{Remove concepts not present in training data}) To make sure our concept layer accurately presents its target concepts, we remove any concepts that do not activate CLIP highly. This cut-off is dataset-specific, and we delete all concepts with average top-5 activation below the cut-off.
  \item [(5)]
  (\textit{Remove concepts we cannot project accurately}) Remove neurons that are not interpretable from the CBL.
\end{itemize}

\subsection{Learning the Concept Bottleneck Layer.}
After the first step, we obtain the set of human-understandable concepts, next we need to learn how to project from the feature space of the backbone network to an interpretable feature space that corresponds to the set of interpretable concepts in the axial direction. We use a way of learning the projection weights $W_c$ without any labeled concept data by utilizing CLIP-Dissect \cite{oikarinen2023clipdissect}. To start with, we need a set of target concepts that the bottleneck layer is expected to represent as $\mathcal{C}=\left\{c_1, \ldots, c_M\right\}$, as well as a training dataset (e.g., images) $\mathcal{D}=\left\{x_1, \ldots, x_N\right\}$ of the original task, and its corresponding token embedding is denoted as $\mathcal{T}=\left\{X^{(1)},\ldots, X^{(N)}\right\}$, where $N$ is the number of samples. Next, we calculate and save the CLIP concept activation matrix $A$ where $A_{i, j}=E_I\left(x_i\right) \cdot E_T\left(c_j\right)$ and $E_I$ and $E_T$ are the CLIP image and text encoders respectively. $W_c$ is initialized as a random $M \times d_0$ matrix where $d_0$ is the dimensionality of backbone features $f(X)$. We define $f_c(X)=W_c f(X)$, where $f_c\left(X^{(i)}\right) \in \mathbb{R}^M$. We use $e$ to denote a neuron of interest in the projection layer, and its activation pattern is denoted as $q_e$ where $q_e=$ $\left[f_{c, e}\left(X^{(1)}\right), \ldots, f_{c, e}\left(X^{(N)}\right)\right]^{\top}$, with $q_e \in \mathbb{R}^N$ and $f_{c, e}(X)=\left[f_c(X)\right]_e$. Our optimization goal is to minimize the objective $L$ over $W_c$ as follows:
\begin{equation}\label{eq:4}
L\left(W_{c}\right)=\sum_{i=1}^{M}-\operatorname{sim}\left(c_{i}, q_{i}\right):=\sum_{i=1}^{M}-\frac{\bar{q}_{i}^{3} \cdot \bar{A}_{:, i}{ }^{3}}{\left\|\bar{q}_{i}{ }^{3}\right\|_{2}\left\|\bar{A}_{:, i}{ }^{3}\right\|_{2}} .\nonumber
\end{equation}
Here $\operatorname{sim}\left(c_{i}, q_{i}\right)$ is a new fully differentiable similarity function that can be applied to CLIP-Dissect, called cos cubed. $\bar{q}$ indicates vector $q$ normalized to have mean 0 and standard deviation 1.

\section{Example of Step 2}
\label{example}
Figure \ref{fig:enter-label} provides examples of our full prompts for GPT-3 and GPT outputs. For
all experiments, we use the text-davinci-002 model available through OpenAI API. We apply various filters to enhance the quality and reduce the size of our concept set. The filters include: removing concepts longer than 40 characters, eliminating concepts that are too similar to target classes using cosine similarity in a text embedding space with a similarity threshold of 0.85, and removing duplicate or synonymous concepts with a cosine similarity threshold $>$ 0.9.

\begin{figure}[th]
    \centering
    \includegraphics[width=0.85\linewidth]{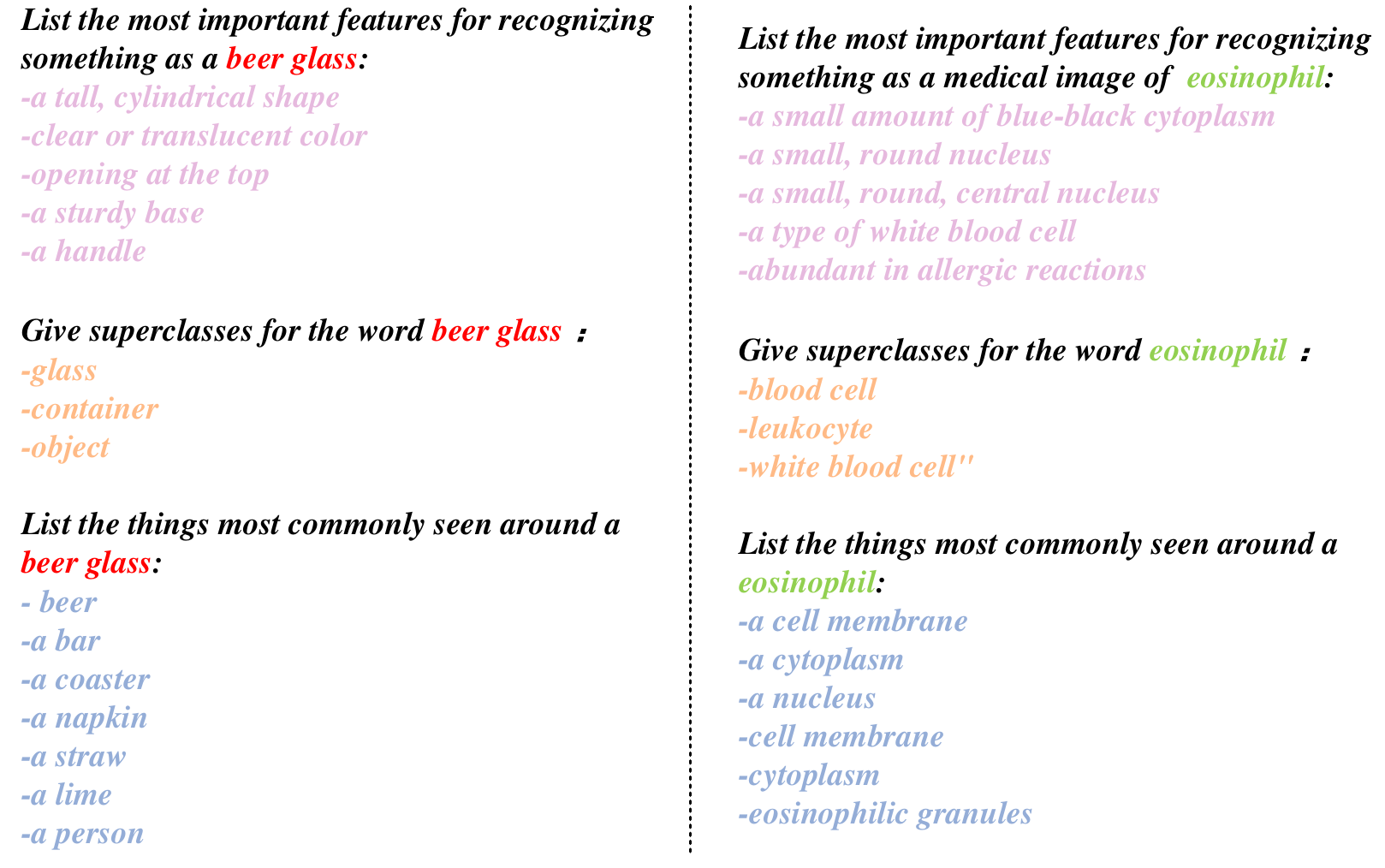}
    \caption{Example of our Step 2.}
    \label{fig:enter-label}
    % vspace{-15pt}
\end{figure}

\section{More Related Work}
\label{related}
\paragraph{Medical Image Classification.} Image and video has attracted much attention in recent years \cite{fang2025your,fang2022multi,fang2023hierarchical,fang2024fewer,fang2023you}, but it is important and complex in the field of medical image analysis. Researchers continue to advance the development of medical image classification techniques by applying different algorithms and methods. Origin medical image classification methods were mainly based on traditional machine learning techniques such as K-classifier, additive regression, bagging, input mapped classifier, decision table, and hand-designed feature extraction methods \cite{Ismael2020MedicalIC}. These methods have achieved some success in the field of medical image classification, but their performance is limited when dealing with complex medical image tasks. With the rise of deep learning, deep neural networks have become a key technology in medical image classification. Deep learning models such as Convolutional Neural Networks (CNNs) have achieved great success in medical image classification with high performance and accuracy. Some of the advanced methods include transfer learning \cite{kim2022transfer}, attention mechanism \cite{li2023deep}, and deep convolutional neural networks \cite{yadav2019deep}. While advanced algorithms have made significant progress in the field of medical image classification, the ensuing problems of black-box nature and instability have become pressing challenges \cite{rudin2019stop}. The complexity of advanced techniques, such as deep learning, leads to opacity in model decisions, making it difficult to explain their behavior in specific contexts. At the same time, the model's sensitivity to input data may lead to inconsistent results in the face of noise or small changes, reducing the model's robustness \cite{szegedy2014intriguing}. There is a need to continuously seek a balance between performance and interpretability.

\paragraph{Robustness for ViTs.}
There is also a substantial body of work on achieving robustness for ViTs, including studies such as \cite{mahmood2021robustness,salman2022certified,aldahdooh2021reveal,naseer2021intriguing,paul2022vision,mao2022towards}. However, these studies exclusively focus on improving the model's robustness in terms of its prediction, without considering the stability of its interpretation (i.e., attention feature vector distribution). While we do employ the randomized smoothing approach commonly used in adversarial machine learning, our primary objective is to maintain the top-$k$ indices unchanged under perturbations. We introduce DDS, which leverages a smoothing-diffusion process to obtain stable VCT while also enhancing prediction performance.

\section{Detailed Experimental Settings}
\label{exper}
\subsection{Datasets}
\paragraph{HAM10000.} Human Against Machine with 10,015 training images (HAM10000) dataset \cite{Tschandl_2018}  is a reliable dataset consisting of 10,015 skin lesion images with high diversity among skin lesion classes. HAM10000 is a seven-level skin lesion classification dataset from a variety of modalities and populations. This dataset includes all significant categories in the pigmented lesion realm, with more than half of the lesion images verified by pathologists and the remaining lesions confirmed by follow-up examination, expert consensus, or in vivo confocal microscopy. Therefore, the HAM10000 dataset was used in this paper for lesion classification. Due to its unbalanced distribution of the number of samples with different labels, new samples are added by random sampling from a small number of classes, so the proportion of samples in each class is 1.

\paragraph{Covid19-CT.} This dataset has a total of 746 lung CT images and provides default divisions for train, val, and test data \cite{zhao2020COVID-CT-Dataset}. In addition, for each new coronavirus-infected CT image, this dataset gives a description of the basic information of the corresponding patient, and this dataset is intended to promote the study of algorithms for the identification of new coronavirus infections in lung CT (2D). For the Covid19-CT dataset, we use a randomized cropping strategy to perform data enhancement.

\paragraph{BloodMNIST.} BloodMNIST dataset is a subset of the MedMNIST benchmark dataset collection \cite{Yang_2023}. This dataset contains images of individual normal cells from individuals who do not have any infections, blood disorders, or tumor diseases. These individuals also did not receive any medications at the time of blood collection. The dataset is categorized into eight types of cells: neutrophils, eosinophils, basophils, lymphocytes, monocytes, immature granulocytes, red blood cells, and platelets. For the BloodMNIST dataset, we need to use mean=0.5 and std=0.5 for normalization.

\paragraph{OCT2017.} Optical coherence tomography (OCT) is an imaging modality capable of viewing the morphology of the retina layer. Therefore, it is the most commonly used in diagnosing and further evaluating macular disease. This paper uses a public OCT dataset named OCT2017 \cite{Kermany2018IdentifyingMD}. The dataset comprised 84,484 OCT images with four retinal disease classes (Normal, CNV, DME, Drusen) divided into three folders (train, evaluation, and test).

\subsection{Backbone}
Vision transformer(ViT) \cite{dosovitskiy2020image} uses self-attention modules. Similar to tokens in the text domain, ViTs divide each image into a sequence of patches (visual tokens) and then feed them into self-attention layers to produce representations of correlations between visual tokens. We use the pre-trained backbones in the timm library for classification. We both leverage the base version with a patch size of 16 and an image size of 224. 

\subsection{Number of Concepts}
In our framework, the number of concepts used in each dataset is related to the number of its categories. For example, the HAM10000 model integrates 79 concepts, the Covid19-CT model utilizes 21, the BloodMNIST model utilizes 82 and the OCT2017 model utilizes 48.

\subsection{Baselines}
\paragraph{Standard.} The standard model functions as an image classification model, extracting image features using the identical backbone as our SVCT model. It then connects a fully connected layer to accomplish the image classification task. In this paper, the standard model is ViT.

\paragraph{Lable-free CBM.} Using a neural network backbone, the Label-free CBM converts the backbone into an interpretable CBM without requiring concept labels, following these four steps – Step 1: Establish the initial concept set and filter out undesired concepts; Step 2: Calculate embeddings from the backbone and the concept matrix on the training dataset; Step 3: Train projection weights $W_c$ to establish a Concept Bottleneck Layer (CBL); Step 4: Train the weights $W_F$ of the sparse final layer for making predictions.

\subsection{Experimental Setup}
Table \ref{tab:4} is presented for a comprehensive overview of our experimental setup, enumerating the crucial parameters employed in our training and evaluation procedures. The selection of these parameter values draws upon prior research and experimental insights, with meticulous adjustments made to ensure optimal performance. It is important to note that these parameters encompass the model architecture and optimizer type and pivotal settings such as learning rate, batch size, number of training iterations, and more. Consultation of Table \ref{tab:4} enables readers to grasp the specific configuration of our experiment and facilitates reproducibility if required.

\begin{table}[thbp]
% \vspace{-5pt}
\centering
\caption{Model parameter configuration.}
\label{tab:4}
\resizebox{1\linewidth}{!}{
\begin{tabular}{lll}
\toprule
\textbf{Argument} & \textbf{Value} & \textbf{Remark} \\
\midrule
batch\_size & 512 & Batch size used when saving model/CLIP activations \\
saga\_batch\_size & 256 & Batch size used when fitting final layer \\
proj\_batch\_size & 5000 & Batch size to use when learning projection layer \\
clip\_cutoff & 0.25 & concepts with smaller top5 clip activation will be deleted \\
proj\_steps & 1000 & how many steps to train the projection layer for \\
interpretability\_cutoff & 0.45 & concepts with smaller similarity to target concept will be deleted \\
lam & 0.0007 & Sparsity regularization parameter, higher-more sparse \\
n\_iters & 1000 & How many iterations to run the final layer solver for \\
$\rho_u$ & [$6/255$, $10/255$] & \\
$S$ &  $8/255$ & \\
trial\_num&  5 & \\
\bottomrule
\end{tabular}}
% \vspace{-10pt}
\end{table}

\section{Additional Ablation Study}
\label{ablation study}
\paragraph{Effect of Feature Fusion and DDS.}
In this paper, we solve the problem of accuracy degradation caused by information leakage by fusing the concept feature and backbone feature. In this part, we conduct ablation experiments for the feature fusion module, and the model without the feature fusion module is label-free CBM. It should be noted that after adding the DDS module to the label-free CBM alone, its performance is basically the same as that of the SVCT in terms of interpretation stability, so we do not show the results of the stability ablation experiments here.
\begin{table}[th]
    \centering
    \caption{Results of ablation study on SVCT. We assess the efficacy of DDS and feature fusion under input perturbation.}
    \resizebox{0.85\textwidth}{!}{
    \begin{tabular}{lcccccccccc}
    \toprule
    \multirow{2}{*}{\textbf{Method}} & \multicolumn{2}{c}{\textbf{Setting}}&\multirow{2}{*}{\textbf{HAM10000}} & \multirow{2}{*}{\textbf{Covid19-CT}} & \multirow{2}{*}{\textbf{BloodMNIST}} & \multirow{2}{*}{\textbf{OCT2017}} \\
    \cmidrule{2-3}
    & Feature Fusion & DDS \\
\midrule
\multirow{4}{*}{$\rho_u= 0$} 
    &            &            & $93.61 \%$ & $79.75 \%$ & $94.97 \%$ & $97.50 \%$  \\
    &            & \checkmark & $94.32 \%$  & $79.88 \%$ & $95.02 \%$ & $97.32 \%$  \\
    & \checkmark &            & $99.00 \%$  & $81.23 \%$ & $96.81 \%$  & $99.40\%$  \\
    \rowcolor{grey!20}
    & \checkmark & \checkmark & $\textbf{99.05\%}$ & $\textbf{81.37\%}$ & $\textbf{96.96\%}$ & $\textbf{99.50\%}$  \\
\hline
\multirow{4}{*}{$\rho_u=10/255$} 
    &            &            & $88.70 \%$ & $65.12 \%$ & $75.63 \%$ & $90.58\%$  \\
    &            & \checkmark & $90.17 \%$ & $67.32 \%$ & $80.43 \%$ & $92.37 \%$  \\
    & \checkmark &            & $92.56 \%$ & $68.22 \%$ & $80.59 \%$ & $95.40 \%$  \\
    \rowcolor{grey!20}
    & \checkmark & \checkmark & $\textbf{97.24\%}$ & $\textbf{71.65\%}$ & $\textbf{92.65\%}$ & $\textbf{98.48\%}$  \\
\bottomrule
    \end{tabular}}
    % \vspace{-10pt}
\end{table}

\section{Computational Cost}
\label{cost_a}
In our framework, we use ViTs as the backbone, and in this part of the experiments, we show an example of a model applied to the OCT2017 dataset, where the number of concepts used in the model is 56, and the finalized task is a quadruple categorization, and the dimensions of the model are shown in Table \ref{cost}.
\begin{table}[thbp]
    \centering
    \caption{Results of computational cost.} %In the first row, we have the standard backbone image classification model VIT, which lacks interpretability and faithfulness. The latest CBM model with interpretability is Label-free CBM \cite{oikarinen2023labelfree}, which employs an unsupervised process for generating concept sets, eliminating the need for manual labeling. The model classification accuracies of SVCT under different levels of perturbation are given in the Table.}
    \label{cost}
    \resizebox{0.7\linewidth}{!}{
    \begin{tabular}{lcccc}
        \toprule
        \textbf{ } & \textbf{ViTS} & \textbf{Label-free CBM} & \textbf{SVCT} \\
        \midrule
        num\_params & 85802728 & 85762568(+40160) & 85845960(+43232)\\
        GFLOPS &  17.56 & 17.56 & 17.56\\
        \bottomrule
    \end{tabular}}
    % \vspace{-15pt}
\end{table}

\section{Limitations and Social Impacts}
\label{limitations}
\paragraph{Limitations.} Although our model maintains good accuracy while ensuring interpretability, it still has some limitations. First, SVCT can best be used in collaboration with medical experts as the human evaluation for interpretation quality. Second, our model provides stable explanations in the face of noisy perturbations. We only tested it in the case of Gaussian noise, which is the most common in healthcare settings. Other situations in real healthcare environments still differ from Gaussian noise, which requires further testing. However, our theory proved that Gaussian noise is near-optimal and gave the worst-case of perturbations.

\noindent{\bf Social Impacts.}

\noindent{\bf Positive societal impacts:}

\begin{itemize}
    \item Improved transparency in the medical field. The development of explainable AI models like the Stable Vision Concept Transformer (SVCT) can address the concern of transparency in the medical field. By providing interpretable explanations for the model's decisions, SVCT enables healthcare professionals and patients to understand the reasoning behind medical predictions and diagnoses. This transparency can enhance trust in AI systems and facilitate better collaboration between humans and machines.
    \item Human-understandable high-level concepts. Concept Bottleneck Models (CBMs), including SVCT, aim to generate a conceptual layer that extracts high-level conceptual features from medical data. This can be beneficial in the medical field as it allows healthcare professionals to gain insights into the underlying factors influencing the model's predictions. Understanding these high-level concepts can lead to improved medical knowledge, identification of new patterns, and potential discoveries that can benefit patient care and treatment.
    \item Enhanced decision-making capabilities. SVCT leverages conceptual features and fuses them with image features to enhance decision-making capabilities. By incorporating these conceptual features into the model, SVCT can provide a more comprehensive understanding of medical data and make more informed predictions. This has the potential to improve diagnostic accuracy, treatment planning, and patient outcomes.
    \item Faithful explanations under perturbations. SVCT addresses the limitation of CBMs by consistently providing faithful explanations even when faced with input perturbations. This means that the model's interpretability remains stable and reliable, even in challenging scenarios. In the medical field, where data can be noisy or incomplete, having a model that can provide trustworthy explanations despite perturbations can be crucial for making reliable decisions.
\end{itemize}

\noindent{\bf Negative societal impacts:}

\begin{itemize}
    \item Potential reduction in model performance. The paper mentions that CBMs, including SVCT, can negatively impact model performance. While SVCT aims to maintain accuracy while remaining interpretable, there may still be a trade-off between interpretability and performance. If the conceptual layer or the explainability mechanisms introduced in SVCT significantly affect the model's predictive accuracy, it could limit its usefulness in real-world medical applications. However, this situation is not caused by the framework SVCT, which is the common limitation of all concept-based models.
    \item Limited adoption in the medical field. Despite the benefits of SVCT, the paper acknowledges that the use of CBMs in the medical field is severely limited. This limitation could be due to various factors, such as the complexity of implementing CBMs in clinical settings, the need for extensive validation and regulatory approval, or the preference for more traditional, less interpretable models. If the adoption of SVCT or similar models remains limited, the potential societal impacts, both positive and negative, might not be fully realized. Also, this situation is not caused by the framework SVCT, it exists in all medical-oriented interpretable models.
\end{itemize}

\section{More Experiments}
\label{more ex}
\subsection{Experiments on More Conceptual Spaces}
To demonstrate that SVCT can provide more stable explanations within the same conceptual space, we repeatedly generated different conceptual spaces and replicated the experiments in these spaces. The experimental results are shown in table \ref{tab:5}, \ref{tab:6}, \ref{tab:7}, and \ref{tab:8}. Based on the experimental findings, our SVCT demonstrates greater stability than other baselines when subjected to input perturbation, rendering it a more faithful interpretation. Additionally, our approach showcases minimal accuracy degradation compared to the vanilla CBM.
\begin{table}[thbp]
    \centering
    \caption{Results for both the baselines and SVCT on the accuracy. Experiments are repeated under the \textcolor{red}{new-1} concept space.}
    \label{tab:5}
    \resizebox{1\textwidth}{!}{
    \begin{tabular}{lcccc}
    \toprule
        \textbf{Method} & \textbf{HAM10000} & \textbf{Covid19-CT} & \textbf{BloodMNIST} & \textbf{OCT2017} \\
        \midrule
        Standard (No interpretability) & $99.13 \%$ & $81.62 \%$ & $97.05 \%$ & $99.70 \%$ \\
        \hline
        Label-Free CBM (LF-CBM) & $96.11 \%$ & $76.95 \%$ & $95.53 \%$ & $98.20 \%$ \\
        Post-hoc CBM (P-CBM) & $97.10 \%$ & $74.33 \%$ & $95.22 \%$ & $98.30 \%$ \\
        Vision Concept Transformer (VCT) & $99.05 \%$ & $80.32 \%$ & $96.33 \%$ & $99.00 \%$ \\
        \rowcolor{grey!20}
        \textbf{SVCT} & $\textbf{99.10\%}$ & $\textbf{81.00\%}$ & $\textbf{96.93\%}$ & $\textbf{99.40\%}$ \\
        % \hline
        % $\rho_u=6/255$ - base & $93.31 \%$ & $66.04 \%$ & $89.71 \%$ & $95.20 \%$ \\
        % \rowcolor{grey!20}
        % $\rho_u=6/255$ - \textbf{SVCT} & $98.55 \%$ & $75.58 \%$ & $95.06 \%$ & $98.70 \%$ \\
        % \hline
        % $\rho_u=7/255$ - base & $93.21 \%$ & $63.86 \%$ & $88.07 \%$ & $94.70 \%$ \\
        % \rowcolor{grey!20}
        % $\rho_u=7/255$ - \textbf{SVCT} & $98.23 \%$ & $74.86 \%$ & $94.50 \%$ & $98.65 \%$ \\
        \hline
        $\rho_u=8/255$ - LF-CBM & $92.51 \%$ & $62.31 \%$ & $86.20 \%$ & $94.30 \%$ \\
        $\rho_u=8/255$ - P-CBM & $90.32 \%$ & $67.55 \%$ & $80.21 \%$ & $91.50 \%$ \\
        $\rho_u=8/255$ - VCT & $95.24 \%$ & $70.13 \%$ & $90.14 \%$ & $95.60 \%$ \\
        \rowcolor{grey!20}
        $\rho_u=8/255$ - \textbf{SVCT} & $\textbf{98.12\%}$ & $\textbf{74.56\%}$ & $\textbf{93.93\%}$ & $\textbf{98.60\%}$ \\
        % \hline
        % $\rho_u=9/255$ - base & $91.79 \%$ & $61.37 \%$ & $84.43 \%$ & $93.45 \%$ \\
        % \rowcolor{grey!20}
        % $\rho_u=9/255$ - \textbf{SVCT} & $97.93 \%$ & $74.05 \%$ & $93.36 \%$ & $98.50 \%$ \\
        \hline
        $\rho_u=10/255$ - LF-CBM & $91.24 \%$ & $60.87 \%$ & $82.74 \%$ & $92.50 \%$ \\
        $\rho_u=10/255$ - P-CBM & $88.32 \%$ & $65.87 \%$ & $73.22 \%$ & $90.10 \%$ \\
        $\rho_u=10/255$ - VCT & $94.87 \%$ & $68.44 \%$ & $86.53 \%$ & $93.50 \%$ \\
        \rowcolor{grey!20}
        $\rho_u=10/255$ - \textbf{SVCT} & $\textbf{97.63\%}$ & $\textbf{73.77\%}$ & $\textbf{92.74\%}$ & $\textbf{98.40\%}$ \\
        \bottomrule
    \end{tabular}}
    % \caption{The table presents accuracy results for both the baseline and SVCT models, both before and after the application of perturbations, across four medical datasets. In the first row, we have the standard backbone image classification model ViT, which lacks interpretability and faithfulness. The latest CBM model with interpretability is Label-free CBM \cite{oikarinen2023labelfree}, which employs an unsupervised process for generating concept sets, eliminating the need for manual labeling. The model classification accuracies of SVCT under different levels of perturbation are given in Table. This is a repeated experiment under the \textcolor {red} {new-1} concept space.}
    % \vspace{-15pt}
\end{table}
\begin{table}[thbp]
    \centering
    \caption{Results for the baselines and SVCT on
CFS and CPCS under various perturbations. Experiments are repeated under the \textcolor{red}{new-1} concept space.} \label{tab:6}
    \resizebox{1\textwidth}{!}{
    \begin{tabular}{lcccccccc}
    \toprule
    \multirow{2}{*}{\textbf{Method}} & \multicolumn{2}{c}{\textbf{HAM10000}} & \multicolumn{2}{c}{\textbf{Covid19-CT}} & \multicolumn{2}{c}{\textbf{BloodMNIST}} & \multicolumn{2}{c}{\textbf{OCT2017}} \\
    \cmidrule{2-9}
    & CFS & CPCS & CFS & CPCS & CFS & CPCS & CFS & CPCS \\
\midrule
$\rho_u=6/255$ - LF-CBM & 0.3417 & 0.9374 & 0.6566 & 0.7748 & 0.5200 & 0.8567 & 0.3742 & 0.9288 \\
% $\rho_u=6/255$ - P-CBM & 0.3132 & 0.9524 & 0.5891 & 0.8166 & 0.4002 & 0.9233 & 0.4118 & 0.9176 \\
$\rho_u=6/255$ - VCT & 0.3401 & 0.9384 & 0.6882 & 0.7533 & 0.5441 & 0.8432 & 0.3662 & 0.9308 \\
\rowcolor{grey!20}
$\rho_u=6/255$ - \textbf{SVCT} & \textbf{0.2659} & \textbf{0.9617} & \textbf{0.5202} & \textbf{0.8482} & \textbf{0.3519} & \textbf{0.9337} & \textbf{0.3411} & \textbf{0.9432} \\
% \hline
% $\rho_u=7/255$ - base & 0.3608 & 0.9301 & 0.6919 & 0.7517 & 0.5675 & 0.8293 & 0.3844 & 0.9240 \\
% \rowcolor{grey!20}
% $\rho_u=7/255$ - \textbf{SVCT}& 0.2821 & 0.9568 & 0.5628 & 0.8228 & 0.3931 & 0.9172 & 0.3528 & 0.9367 \\
\hline
$\rho_u=8/255$ - LF-CBM & 0.3783 & 0.9228 & 0.7219 & 0.7322 & 0.6053 & 0.8064 & 0.3946 & 0.9193 \\
% $\rho_u=8/255$ - P-CBM & 0.3409 & 0.9331 & 0.6641 & 0.7750 & 0.4811 & 0.8533 & 0.4463 & 0.9152 \\
$\rho_u=8/255$ - VCT & 0.4144 & 0.9032 & 0.8123 & 0.6735 & 0.7215 & 0.7304 & 0.3823 & 0.9233 \\
\rowcolor{grey!20}
$\rho_u=8/255$ - \textbf{SVCT} & \textbf{0.2973} & \textbf{0.9520} & \textbf{0.5927} & \textbf{0.8048} & \textbf{0.4314} & \textbf{0.8999} & \textbf{0.3619} & \textbf{0.9327} \\
% \hline
% $\rho_u=9/255$ - base & 0.3944 & 0.9157 & 0.7484 & 0.7138 & 0.6357 & 0.7875 & 0.4015 & 0.9159 \\
% \rowcolor{grey!20}
% $\rho_u=9/255$ - \textbf{SVCT} & 0.3121 & 0.9467 & 0.6267 & 0.7831 & 0.4662 & 0.8828 & 0.3697 & 0.9295 \\
\hline
$\rho_u=10/255$ - LF-CBM & 0.4089 & 0.9091 & 0.7711 & 0.6985 & 0.6611 & 0.7713 & 0.4077 & 0.9123 \\
% $\rho_u=10/255$ - P-CBM & 0.3612 & 0.9211 & 0.7220 & 0.7336 & 0.5322 & 0.8123 & 0.4622 & 0.9102 \\
$\rho_u=10/255$ - VCT & 0.4652 & 0.8821 & 0.9011 & 0.6052 & 0.8122 & 0.6621 & 0.4122 & 0.9118 \\
\rowcolor{grey!20}
$\rho_u=10/255$ - \textbf{SVCT} & \textbf{0.3261} & \textbf{0.9417} & \textbf{0.6525} & \textbf{0.7656} & \textbf{0.4983} & \textbf{0.8658} & \textbf{0.3764} & \textbf{0.9245} \\
\bottomrule
\end{tabular}}
% \vspace{-15pt}
\end{table}
\begin{table}[thbp]
    \centering
    \caption{Results for both the baselines and SVCT on the accuracy. Experiments are repeated under the \textcolor{red}{new-2} concept space.}
    \label{tab:7}
    \resizebox{1\textwidth}{!}{
    \begin{tabular}{lcccc}
    \toprule
        \textbf{Method} & \textbf{HAM10000} & \textbf{Covid19-CT} & \textbf{BloodMNIST} & \textbf{OCT2017} \\
    \midrule
        Standard (No interpretability) & $99.13 \%$ & $81.62 \%$ & $97.05 \%$ & $99.70 \%$ \\
        \hline
        Label-Free CBM (LF-CBM) & $95.56 \%$ & $78.82 \%$ & $94.59 \%$ & $97.60 \%$ \\
         Post-hoc CBM (P-CBM) & $96.20 \%$ & $75.12 \%$ & $93.13 \%$ & $98.50 \%$ \\
        Vision Concept Transformer (VCT) & $98.87 \%$ & $80.02 \%$ & $95.98 \%$ & $99.20 \%$ \\
        \rowcolor{grey!20}
        \textbf{SVCT} & $\textbf{99.05\%}$ & $\textbf{80.37\%}$ & $\textbf{96.81\%}$ & $\textbf{99.50\%}$ \\
        \hline
        % $\rho_u=6/255$ - base & $91.56 \%$ & $70.40 \%$ & $88.02 \%$ & $94.70 \%$ \\
        % \rowcolor{grey!20}
        % $\rho_u=6/255$ - \textbf{SVCT} & $98.15 \%$ & $75.45 \%$ & $95.00 \%$ & $98.80 \%$ \\
        % \hline
        % $\rho_u=7/255$ - base & $91.01 \%$ & $69.16 \%$ & $85.60 \%$ & $94.40 \%$ \\
        % \rowcolor{grey!20}
        % $\rho_u=7/255$ - \textbf{SVCT} & $98.10 \%$ & $74.14 \%$ & $94.65 \%$ & $98.75 \%$ \\
        % \hline
        $\rho_u=8/255$ - LF-CBM & $90.26 \%$ & $68.02 \%$ & $83.22 \%$ & $93.77 \%$ \\
        $\rho_u=8/255$ - P-CBM & $90.01 \%$ & $67.44 \%$ & $82.21 \%$ & $92.50 \%$ \\
        $\rho_u=8/255$ - VCT & $95.41 \%$ & $69.33 \%$ & $91.12 \%$ & $95.40 \%$ \\
        \rowcolor{grey!20}
        $\rho_u=8/255$ - \textbf{SVCT} & $\textbf{98.02\%}$ & $\textbf{72.69\%}$ & $\textbf{94.15\%}$ & $\textbf{98.67\%}$ \\
        % \hline
        % $\rho_u=9/255$ - base & $89.83 \%$ & $67.13\%$ & $80.07 \%$ & $93.35 \%$ \\
        % \rowcolor{grey!20}
        % $\rho_u=9/255$ - \textbf{SVCT} & $97.75 \%$ & $72.04 \%$ & $93.43 \%$ & $98.62 \%$ \\
        \hline
        $\rho_u=10/255$ - LF-CBM & $89.35 \%$ & $66.11 \%$ & $78.53 \%$ & $92.78 \%$ \\
        $\rho_u=10/255$ - P-CBM & $87.54 \%$ & $64.97 \%$ & $75.22 \%$ & $89.90 \%$ \\
        $\rho_u=10/255$ - VCT & $93.22 \%$ & $66.21 \%$ & $87.32 \%$ & $94.50 \%$ \\
        \rowcolor{grey!20}
        $\rho_u=10/255$ - \textbf{SVCT} & $\textbf{97.60\%}$ & $\textbf{71.59\%}$ & $\textbf{92.76\%}$ & $\textbf{98.54\%}$ \\
    \bottomrule
    \end{tabular}}
    % \caption{The table presents accuracy results for both the baseline and SVCT models, both before and after the application of perturbations, across four medical datasets. In the first row, we have the standard backbone image classification model ViT, which lacks interpretability and faithfulness. The latest CBM model with interpretability is Label-free CBM \cite{oikarinen2023labelfree}, which employs an unsupervised process for generating concept sets, eliminating the need for manual labeling. The model classification accuracies of SVCT under different levels of perturbation are given in Table. This is a repeated experiment under the \textcolor {red} {new-2} concept space.}
    % \vspace{-15pt}
\end{table}
\begin{table}[thbp]
    \centering
        \caption{Results for the baselines and SVCT on CFS and CPCS under various perturbations. Experiments are repeated under the \textcolor{red}{new-2} concept space.}
    \label{tab:8}
    \resizebox{1\textwidth}{!}{
    \begin{tabular}{lcccccccc}
\toprule
    \multirow{2}{*}{\textbf{Method}} & \multicolumn{2}{c}{\textbf{HAM10000}} & \multicolumn{2}{c}{\textbf{Covid19-CT}} & \multicolumn{2}{c}{\textbf{BloodMNIST}} & \multicolumn{2}{c}{\textbf{OCT2017}} \\
    \cline { 2 - 9 }
    & CFS & CPCS & CFS & CPCS & CFS & CPCS & CFS & CPCS \\
\midrule
$\rho_u=6/255$ - LF-CBM & 0.3440 & 0.9365 & 0.6556 & 0.7759 & 0.5402 & 0.8427 & 0.3749 & 0.9265 \\
% $\rho_u=6/255$ - P-CBM & 0.3194 & 0.9422 & 0.5722 & 0.8233 & 0.4102 & 0.9177 & 0.4093 & 0.9154 \\
$\rho_u=6/255$ - VCT & 0.3566 & 0.9233 & 0.6931 & 0.7488 & 0.5563 & 0.8344 & 0.3690 & 0.9302 \\
\rowcolor{grey!20}
$\rho_u=6/255$ - \textbf{SVCT} & \textbf{0.2015} & \textbf{0.9602} & \textbf{0.5439} & \textbf{0.8395} & \textbf{0.3542} & \textbf{0.9338} & \textbf{0.3507} & \textbf{0.9369} \\
% \hline
% $\rho_u=7/255$ - base & 0.3642 & 0.9285 & 0.6945 & 0.7479 & 0.5908 & 0.8120 & 0.3866 & 0.9214 \\
% \rowcolor{grey!20}
% $\rho_u=7/255$ - \textbf{SVCT}& 0.2261 & 0.9550 & 0.6025 & 0.8062 & 0.3950 & 0.9174 & 0.3636 & 0.9314\\
\hline
$\rho_u=8/255$ - LF-CBM & 0.3829 & 0.9205 & 0.7282 & 0.7241 & 0.6306 & 0.7861 & 0.3964 & 0.9171 \\
% $\rho_u=8/255$ - P-CBM & 0.3488 & 0.9323 & 0.6841 & 0.7632 & 0.4933 & 0.8502 & 0.4452 & 0.9058 \\
$\rho_u=8/255$ - VCT & 0.4188 & 0.9017 & 0.8099 & 0.6751 & 0.7322 & 0.7255 & 0.3951 & 0.9199 \\
\rowcolor{grey!20}
$\rho_u=8/255$ - \textbf{SVCT} & \textbf{0.2335} & \textbf{0.9499} & \textbf{0.6467} & \textbf{0.7796} & \textbf{0.4326} & \textbf{0.9005} & \textbf{0.3722} & \textbf{0.9276} \\
% \hline
% $\rho_u=9/255$ - base & 0.3988 & 0.9136 & 0.7542 & 0.7040 & 0.6624 & 0.7646 & 0.4040 & 0.9034 \\
% \rowcolor{grey!20}
% $\rho_u=9/255$ - \textbf{SVCT} & 0.2486 & 0.9446 & 0.6809 & 0.7589 & 0.4672 & 0.8836 & 0.3809 & 0.9236 \\
\hline
$\rho_u=10/255$ - LF-CBM & 0.4129 & 0.9072 & 0.7779 & 0.6861 & 0.6889 & 0.7465 & 0.4160 & 0.8932 \\
% $\rho_u=10/255$ - P-CBM & 0.3654 & 0.9202 & 0.7241 & 0.7325 & 0.5477 & 0.8045 & 0.4722 & 0.8762 \\
$\rho_u=10/255$ - VCT & 0.4733 & 0.8754 & 0.9123 & 0.5938 & 0.8213 & 0.6574 & 0.4122 & 0.9018 \\
\rowcolor{grey!20}
$\rho_u=10/255$ - \textbf{SVCT} & \textbf{0.2631} & \textbf{0.9392} & \textbf{0.7105} & \textbf{0.7396} & \textbf{0.4991} & \textbf{0.8666} & \textbf{0.3884} & \textbf{0.9201} \\
\bottomrule
    \end{tabular}}
    % \vspace{-15pt}
\end{table}

\section{Presentation}
\label{present}
More presentations are shown in Figure \ref{fig:vis1}, \ref{fig:vis2}, \ref{fig:vis3}, and \ref{fig:vis4}.
\begin{figure}[th]
    \centering    \includegraphics[width=0.85\linewidth]{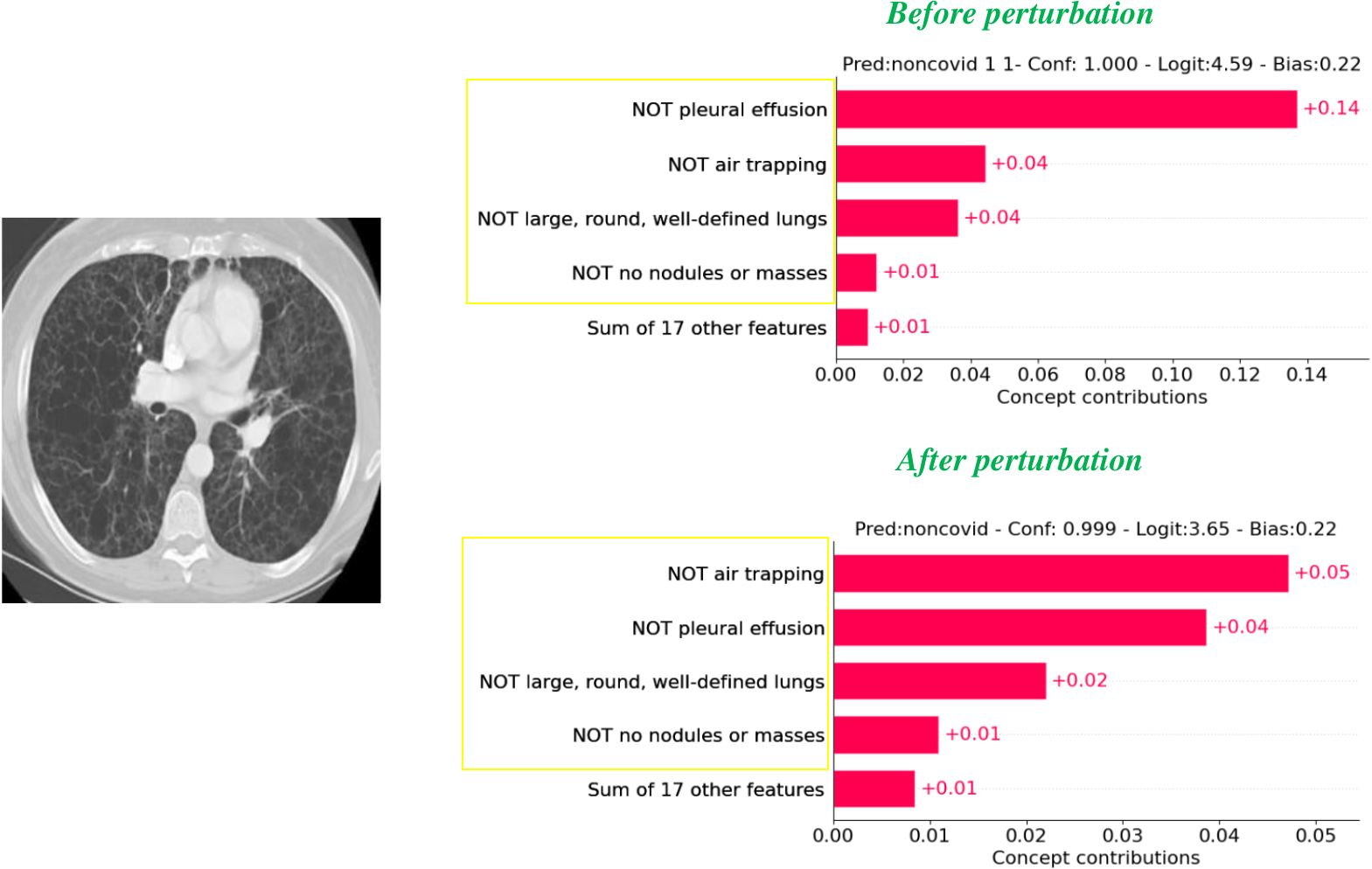}
    \caption{The visualizations for concept weights on one sample from Covid19-CT.}
    \label{fig:vis1}% \vspace{-15pt}
\end{figure}
\begin{figure}[th]
    \centering
      % \vspace{-15pt}
    \includegraphics[width=0.85\linewidth]{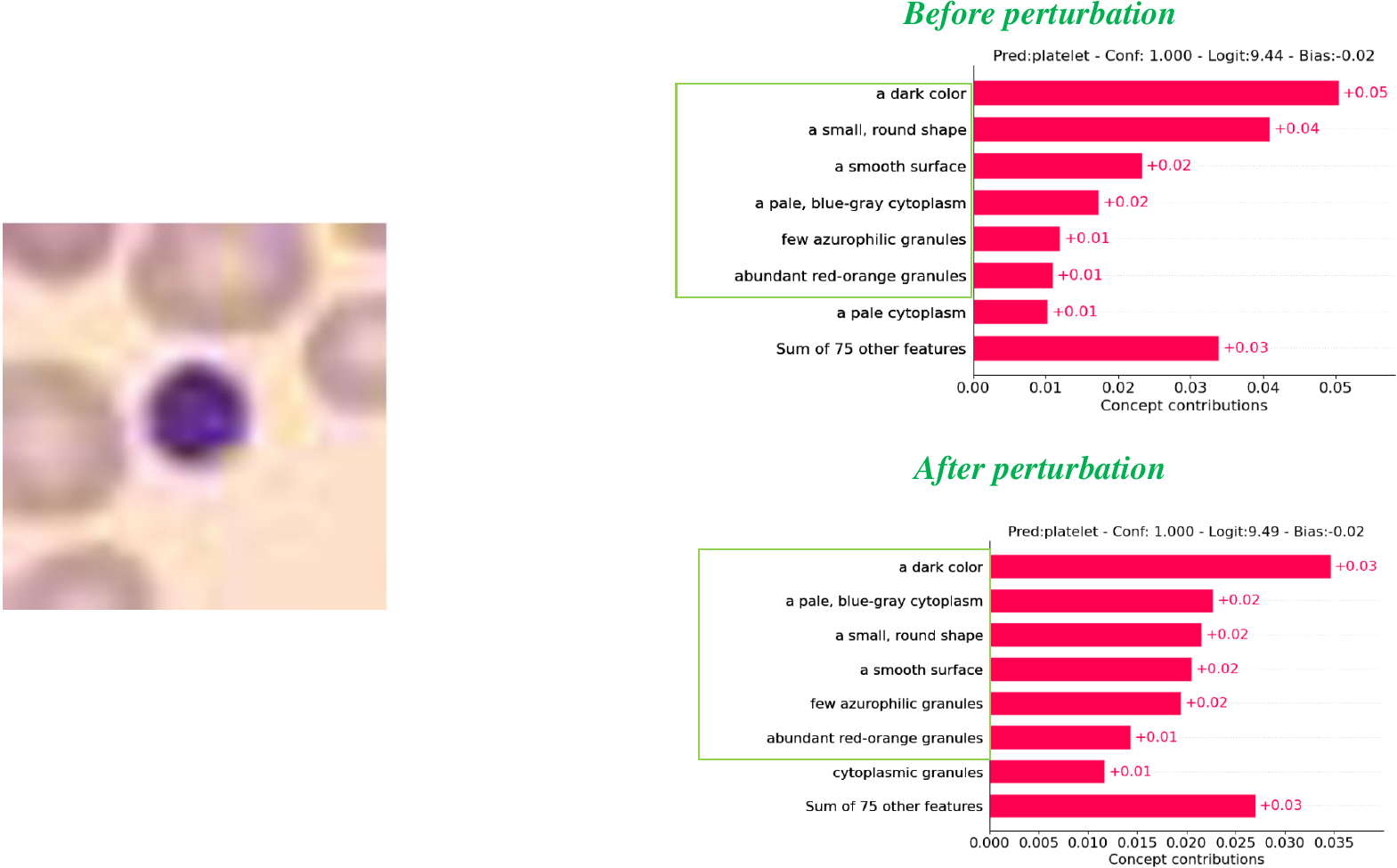}
    \caption{The visualizations for concept weights on one sample from BloodMNIST.}
    \label{fig:vis2}
    % \vspace{-15pt}
\end{figure}
\begin{figure}[th]
    \centering
      % \vspace{-5pt}
    \includegraphics[width=0.85\linewidth]{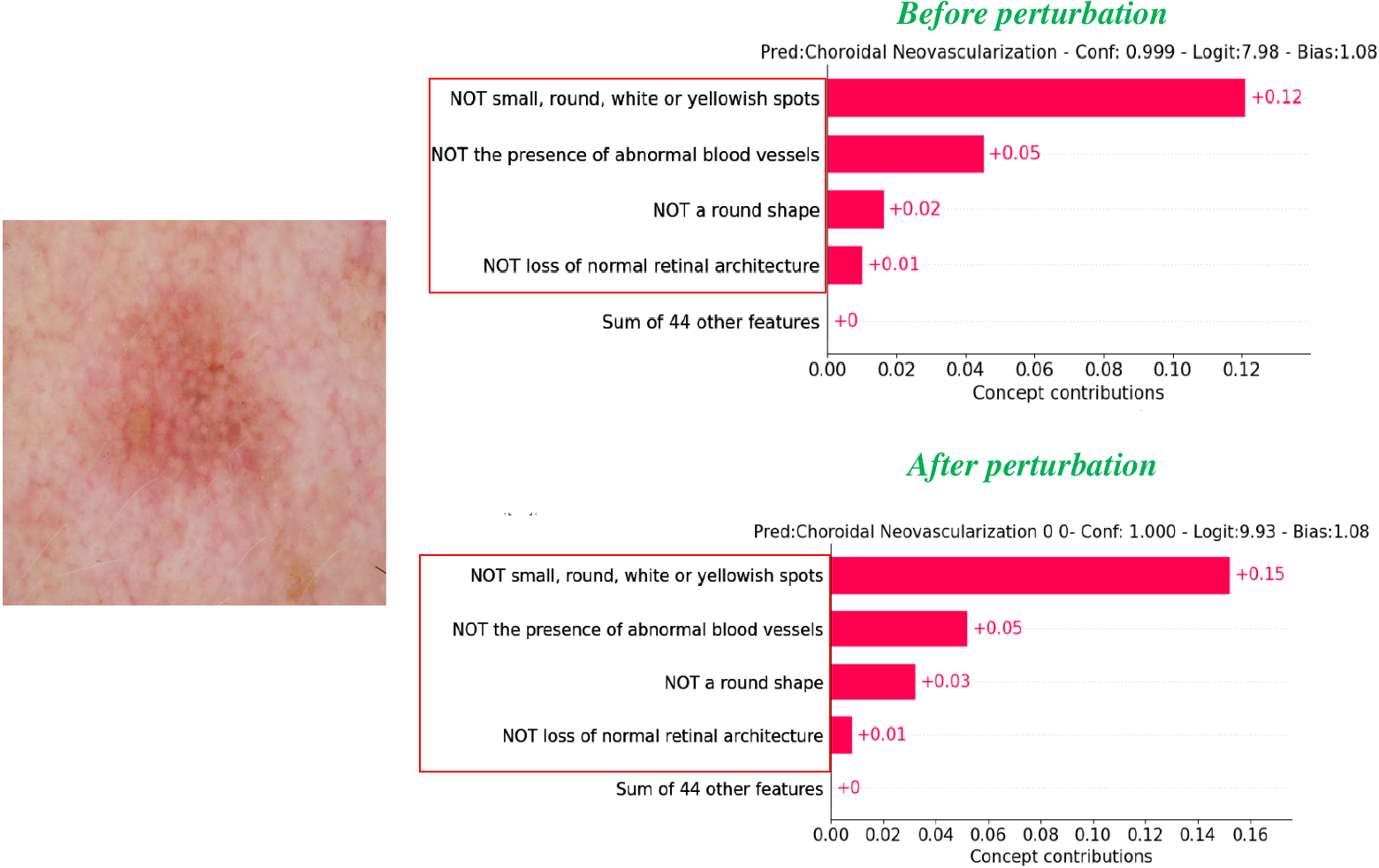}
    \caption{The visualizations for concept weights on one sample from HAM10000.}
    \label{fig:vis3}
    % \vspace{-15pt}
\end{figure}
\begin{figure}[th]
    \centering
      % % \vspace{-5pt}
    \includegraphics[width=0.85\linewidth]{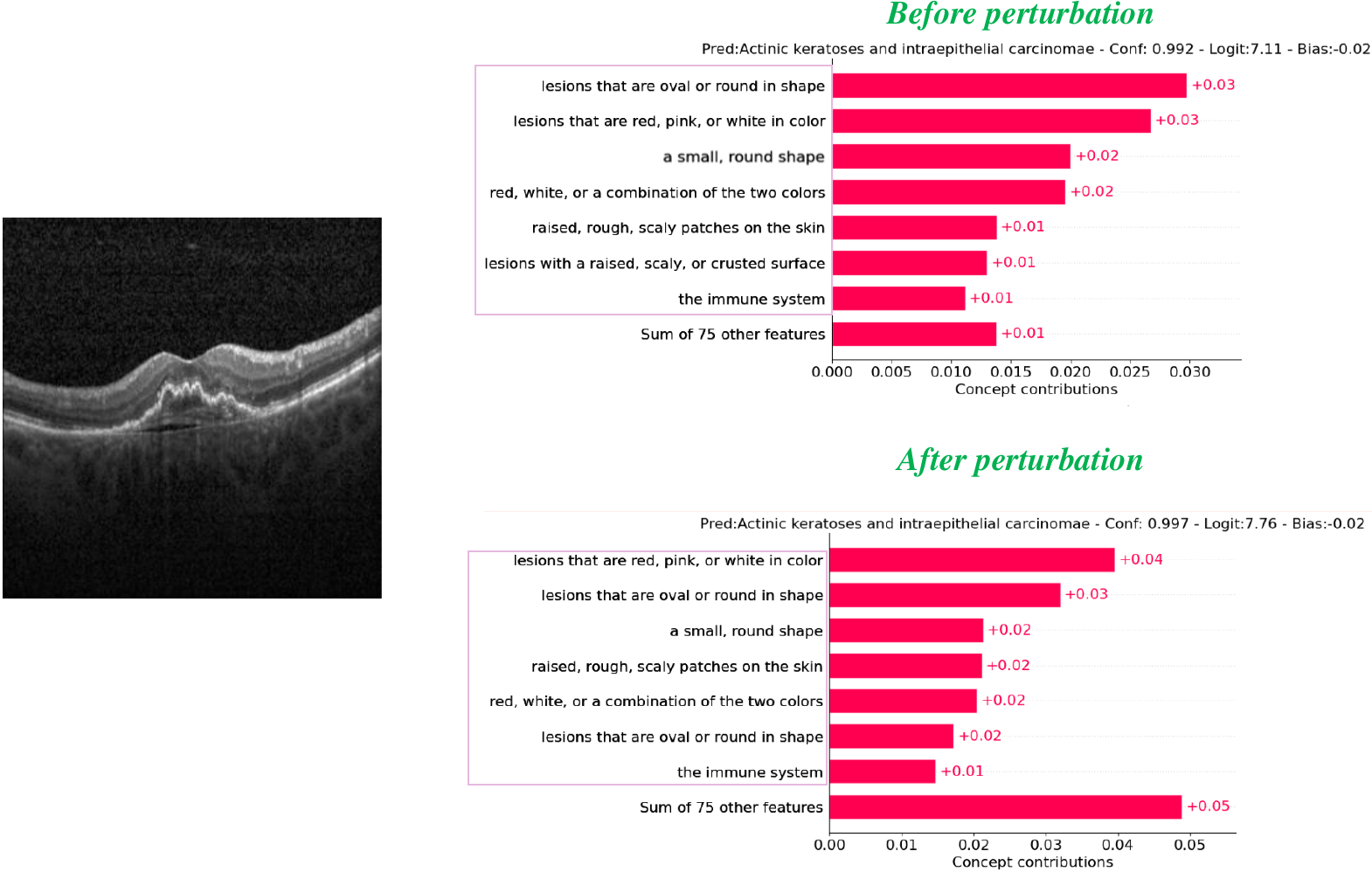}
    \caption{The visualizations for concept weights on one sample from OCT2017.}
    \label{fig:vis4}
    % \vspace{-15pt}
\end{figure}

%% Note that this preceding line implies that you store your BibTeX references in a file called 'mybibliography.bib'. If you instead store your references in a file with a different name, for instance 'references.bib', the preceding line should read '\bibliography{references}'. Whatever you do, DO NOT put the file name extension .bib inside the \bibliography command; this will trip up LaTeX compilers. 
%
% If you do not want to use BibTeX, you can also type up the bibliography exactly as you see fit, using the following structure:

\end{document}